\theoremstyle{plain}
\theoremstyle{definition}
\theoremstyle{remark}
\icmltitlerunning{On the Diminishing Returns of Width for Continual Learning}
\begin{document}

\twocolumn[
\icmltitle{On the Diminishing Returns of Width for Continual Learning}




\begin{icmlauthorlist}
\icmlauthor{Etash Guha}{comp,uw}
\icmlauthor{Vihan Lakshman}{sch}
\end{icmlauthorlist}

\icmlaffiliation{comp}{SambaNova Systems, Palo Alto, USA}
\icmlaffiliation{sch}{ThirdAI, Houston, USA}
\icmlaffiliation{uw}{University of Washington, Seattle, USA}

\icmlcorrespondingauthor{Etash Guha}{\texttt{etash.guha@sambanovasystems.com}}
\icmlcorrespondingauthor{Vihan Lakshman}{\texttt{vihan@thirdai.com}}

\icmlkeywords{Machine Learning, ICML}

\vskip 0.3in
]



\printAffiliationsAndNotice{\icmlEqualContribution} 

\begin{abstract}
While deep neural networks have demonstrated groundbreaking performance in various settings, these models often suffer from \emph{catastrophic forgetting} when trained on new tasks in sequence. Several works have empirically demonstrated that increasing the width of a neural network leads to a decrease in catastrophic forgetting but have yet to characterize the exact relationship between width and continual learning. We designed one of the first frameworks to analyze continuous learning theory and prove that width is directly related to forgetting in feed-forward networks (FFN). In particular, we demonstrate that increasing network widths to reduce forgetting yields diminishing returns. We empirically verify our claims at widths hitherto unexplored in prior studies where the diminishing returns are clearly observed as predicted by our theory.
\end{abstract}

\section{Introduction}

Deep Neural Networks (DNNs) have achieved breakthrough performance in numerous challenging computational tasks that serve as a proxy for intelligence \citep{lecun2015deep}. An essential and practical question is whether the same neural network can continuously learn over a series of tasks while maintaining performance. Practically, using a well-trained neural network to achieve similar quality over a series of tasks is essential for reducing expensive retraining and computational costs \citep{diethe2019continual} and for mimicking the human-like ability to continually update its knowledge \citep{hadsell2020embracing, kudithipudi2022biological, parisi2019continual}. In practice, DNNs exhibit \emph{catastrophic forgetting} when trained over a series of tasks, experiencing a sharp drop in performance on the previously learned tasks.

However, preventing catastrophic forgetting is theoretically and empirically tricky in many situations \citep{knoblauch2020optimal, kim2022theoretical}. Many empirical studies of Continual Learning (CL) have observed that a model's hidden dimension or width is positively correlated with the ability to continually learn \citep{mirzadeh2022architecture, Mirzadeh2022, ramasesh2021effect}. Moreover, catastrophic forgetting is more easily mitigated in the infinite-width or Neural Tangent Kernel regime \citep{bennani2020generalisation, doan2021theoretical, chizat2019lazy, geiger2020disentangling}. However, the exact relationship between width and continual learning still needs to be clarified in the more practical finite-width setting. Theoretically, explaining this relationship requires understanding the effects of training and retraining a model on different datasets, which is a complex and difficult-to-analyze process. Through pure experimentation, most works observe a roughly linear relationship between width and continual learning, such as in small FFNs \citep{Mirzadeh2022} or in large CNNs and ResNets \citep{ramasesh2021effect, mirzadeh2022architecture}, suggesting that increasing the scale of models is a simple method for improving continual learning.

In this paper, we explicitly investigate the relationship between width and continual learning. We have observed empirically and theoretically that \emph{simply increasing a model's width suffers diminishing returns in improving continual learning.} To rigorously analyze these phenomena, we establish one of the first theoretical frameworks for analyzing the continual learning error of Feed-Forward Networks, connecting fundamental properties like depth, number of tasks, sparsity, activation smoothness, and width to continual learning error. We use this framework to prove the connection between width and continual learning in Feed-Forward Networks of arbitrary depth and nonlinear activations more tightly. To circumvent the typical analytical difficulties of continual learning, we use the well-observed empirical observation that wider models move less from initialization during training. For completeness, we empirically verify this relationship on Feed-Forward Networks trained with either Stochastic Gradient Descent (SGD) or Adam \citep{kingma2015adam} optimizers\footnote{https://github.com/vihan-lakshman/diminishing-returns-wide-continual-learning}. This lazy training phenomenon has often been observed empirically in the literature \citep{zou2020gradient, nagarajan2019generalization, li2018learning, neyshabur2018towards, chizat2019lazy, ghorbani2019limitations}. Using this observation, we demonstrate that width acts as a functional regularizer, preventing models trained on subsequent tasks from being too functionally distant from previous models. Specifically, our new guarantees formalize this relationship between width and continual learning for finite-width models with nonlinear activations and variable depth, which do not exist in the literature for wide models. To our knowledge, this theoretical framework is one of the first to provide provable guarantees on the continual learning error of models.

Moreover, we empirically observe these diminishing returns. Some existing works have tested hidden dimensions up to $2048$ \citep{mirzadeh2020linear} where the diminishing returns are not apparent. In particular, we measure the continual learning capabilities of FFNs as the hidden dimension is increased to $2^{16}$, much larger than previously explored in the literature to our knowledge. With these new expansive experiments on standard CL benchmarks, we see this relationship between width and continual learning predicted by our theory. We also test the other correlations predicted by our framework, such as the connections between model depth, model sparsity, and the number of tasks on continual learning error. Our framework predicts that increasing model depth or the number of tasks will also increase continual learning error, matching the empirical observations in \cite{mirzadeh2022architecture}.
Moreover, our model predicts increasing the row-wise sparsity in a model decreases the continual learning error, such as used in \citep{serra2018overcoming}. Our experiments roughly corroborate all of these relationships over several real datasets and model shapes. To our knowledge, this is the first work to provably demonstrate the effects of the number of tasks and row-wise sparsity on continual learning error.  

Our results contribute to the literature examining the relationship between neural network architectures and continual learning performance. We provide one of the first theoretical frameworks for analyzing catastrophic forgetting in Feed-Forward Networks. While our theoretical framework does not perfectly capture all information about continual forgetting empirically, it is a valuable step in analyzing continual learning from a theoretical framework. As predicted by our theoretical framework, we demonstrate empirically that scaling width alone is insufficient for mitigating the effects of catastrophic forgetting, providing a more nuanced understanding of finite-width forgetting dynamics than results achieved in prior studies \citep{Mirzadeh2022, ramasesh2021effect}. We also prove that our theoretical framework predicts several other connections between model architecture and catastrophic forgetting.

\paragraph{Contributions} In summary, we make the following contributions in our work.
\begin{enumerate}
\item We develop one of the first theoretical frameworks to analyze catastrophic forgetting in Feed-Forward Networks. Our theoretical framework corroborates several empirical findings, such as the connection between depth, sparsity, and forgetting. Our theoretical framework also predicts that models incur diminishing returns in terms of continual learning capability as width is increased.
    \item Under this framework, we provably demonstrate that the training of nonlinear, variable depth feed-forward networks incurs continual learning error on the order of $\mathcal{O}\left(t W^{-\beta} \alpha^{\frac{1-2\beta}{2}}\right)$ where $t$ is the number of tasks the model has been trained on, $W$ is the width, $\alpha$ is the sparsity percentage, and $\beta$ is a data-dependent positive value. To our knowledge, this is the first work formalizing the connection between width, sparsity, number of tasks, and continual learning for nonlinear models of variable depth. 
    \item By testing at hidden dimensions not seen previously, we empirically see the diminishing returns of continual learning when increasing width. These experiments hold across many different width Feed Forward Networks on datasets such as Rotated MNIST and Fashion MNIST, Rotated SVHN, and the Rotated German Traffic Signs Benchmark (GTSRB). 
    \item We empirically confirm the predictions of our theoretical frameworks on the impact of depth, number of tasks, and row-wise sparsity on the continual learning error over the same host of datasets, demonstrating the power of our theoretical framework. 
\end{enumerate}
\section{Related Works}

\subsection{Continual Learning}
We review several relevant works in the Continual Learning literature. The original works discussing Continual Learning and Catastrophic Forgetting phenomenon are \citet{ring1997child}, \citet{mccloskey1989catastrophic} and \citet{thrun1995lifelong}. Perhaps most relevant to this work are \citet{mirzadeh2022architecture} and \citet{Mirzadeh2022}, which note the positive correlation between the width of models and continual learning. However, their experiments are limited to small widths, at a maximum of $2048$ hidden dimension, which is small relative to the current deep learning state of the art. Moreover, their analysis is limited, only proving the continual learning in the setting of two-layer linear networks without nonlinear activation. \citet{ramasesh2021effect} and \citet{yoon2018lifelong} discuss how scale broadly empirically affects continual learning but does not provide any theoretical analysis nor focus specifically on width.

\citet{bennani2020generalisation} and \citet{doan2021theoretical} discuss theoretical frameworks for continually learning models in the infinitely wide or NTK regime. Several works have used explicit functional regularization as a way to mitigate catastrophic forgetting \citep{lopez2017gradient, chaudhry2018efficient, farajtabar2020orthogonal, khan2021knowledge, Dhawan2023}. \citet{peng2023ideal} provides a robust theoretical framework, their Ideal Continual Learner framework, which attempts to develop a solid theoretical framework for understanding different continual learning methods. \citet{mirzadeh2020linear} discuss the geometric similarities between the different minima found in a continual learning regime

\subsection{Wide Networks}
Here, we review essential works relevant to understanding wide networks in the literature and different architectures. \citet{nguyen2020wide} discusses the effects of width and depth on the learned representation of the model. \citet{arora2019exact} discusses the empirical benefits of using the infinite-width Neural Tangent Kernel on different classification tasks. \citet{novak2022fast} provides a method to compute the infinite-width model with less memory and better efficiency. \citet{lee2019wide} discuss the training dynamics of wide neural networks under gradient descent. \citet{lu2017expressive} discuss the expressiveness of wide neural networks and how wide neural networks can express certain functions better than deep ones. \citet{jacot2018neural}, \citet{allen2019learning}, and \citet{pmlr-v97-du19c} capture the training dynamics of infinitely-wide neural networks under the Neural Tangent Kernel Regime

\section{Preliminary}

\subsection{Notation}
\label{sec:notation}
Let our model $\mathbf{M}_t$ yielded after training on the $t$th task be denoted as 
$$\mathbf{M}_t(x) = \mathbf{A}_{t, L}\phi_{L-1}(\mathbf{A}_{t, L-1}\phi_{L-2}(\dots \mathbf{A}_{t, 2}\phi_1(\mathbf{A}_{t,1}x)))\text{.}$$ 
Here, $x$ is an input of dimensionality $d_t$. $\phi_{i}$ is the activation function for the $i$th layer of $L_i$ Lipschitz-Smoothness. Moreover let $W$ be the width of $\mathbf{M}$ such that the input layer $\mathbf{A}_{t, 1} \in \mathbb{R}^{W \times d_t}$, last layer $\mathbf{A}_{t, L} \in \mathbb{R}^{K_t \times W}$, and all the middle layers $\mathbf{A}_{t, l} \in \mathbb{R}^{W \times W}$. Here, $K_t$ is the dimensionality of the output of the $t$th task. We will often index a matrix by a set of rows. For example, if $\mathcal{S}$ is a set of row indeces, $\mathbf{A}_{t, l}[\mathcal{S}]$ denotes a matrix in $\mathbb{R}^{|\mathcal{S}| \times W}$ that contains the $i$th row from $\mathbf{A}_{t, l}$ if $i \in \mathcal{S}$. Moreover, let $L$ be the model depth.

\subsection{Problem Setup}
\label{sec:problem_setup}
Here, we will formalize the problem setup of Continual Learning. Formally, say we have $T$ training datasets $\mathcal{D}_1, \dots, \mathcal{D}_T$. The goal of continual learning is to design a model $\mathbf{M}$ such that it performs well on all datasets $\mathcal{D}_t$ for $t \in [T]$. We will describe the $t$th task as a supervised learning classification task where the dataset for the $t$th task is $\mathcal{D}_t = (\mathcal{X}_t, \mathcal{Y}_t)$. Here, $\mathcal{X}_t$ contains $n_t$ datapoints of dimension $d_t$ and $\mathcal{Y}_t$ contains $n_t$ labels of dimension $K_t$. We wish to form $\mathbf{M}$ by sequentially training it on each dataset in increasing order from $\mathcal{D}_1$ to $\mathcal{D}_t$. We will call $\mathbf{M}_t$ the model outputted after training on the $t$th dataset. After retraining a model on the new dataset, we want the new model to remember its behavior on the previous dataset. Namely, we wish to reduce the continual learning error $\epsilon_{t,t^{\prime}}$ where $t \leq t^{\prime}$ and 
$\underset{x \in \mathcal{D}_t}{\max} \left \| \mathbf{M}_t(x) - \mathbf{M}_{t^{\prime}}(x)\right\|_2 \leq \epsilon_{t,t^{\prime}}\text{.}$ Here, after training a model for $t^{\prime}-t$ new datasets, we hope to ensure that the outputs between the original model on the $t$th dataset and the new model on the $t^{\prime}$th dataset are similar. Given $\mathbf{M}_t$ is trained to completion on dataset $\mathcal{D}_t$ and achieves low error, if our new model $\mathbf{M}_{t^{\prime}}$ is close to $\mathbf{M}_t$ on all inputs from $\mathcal{D}_t$, it will also perform well on $\mathcal{D}_t$. Therefore, the main question is the correlation between $W$, the width of the model $\mathbf{M}$, and the continual learning error $\epsilon_{t,t^{\prime}}$.

\subsection{Training Setup}
\label{sec:training_setup}
We will examine the setup where our model $\mathbf{M}$ is a Feed Forward Network with some nonlinear activation. We randomly initialize every layer in the entire model to train the model on the first task $\mathbf{M}_1$. Before training, as is often done in Continual Learning, we will choose a subset of rows to be active. For every row with probability $\alpha$, we will select that row to be active. Otherwise, it will be inactive. Only the active rows will be used for computation during training and inference. Inactive rows will not change during training. We will denote $\mathcal{A}_{t, l}$ as the set of rows active for task $t$ at layer $l$. We use this setup to capture the connections between row-wise sparsity and continual learning empirically observed in continual learning literature \citep{serra2018overcoming}. Setting $\alpha$ to $1$ will recover fully dense training. We then train using Adam or SGD till convergence. To train on a subsequent task, we replace the input and output layers to match the dimensionality of the new tasks. Moreover, we choose which rows will be active for this new task for the intermediate layers. We then retrain till convergence with Adam or SGD. We repeat this training procedure iteratively for every task. For inference on the $t$th task, we take the intermediate layers learned at the final task and replace the input and output layers with the input and output layers trained for the $t$th task to match the dimensionality of the data from the $t$th task. We only use the active rows during training for the $t$th task.

\section{Theoretical Analysis}
\begin{figure*}[t!]
    \centering
    
    \subfloat[MNIST]{\includegraphics[width=0.48\textwidth]{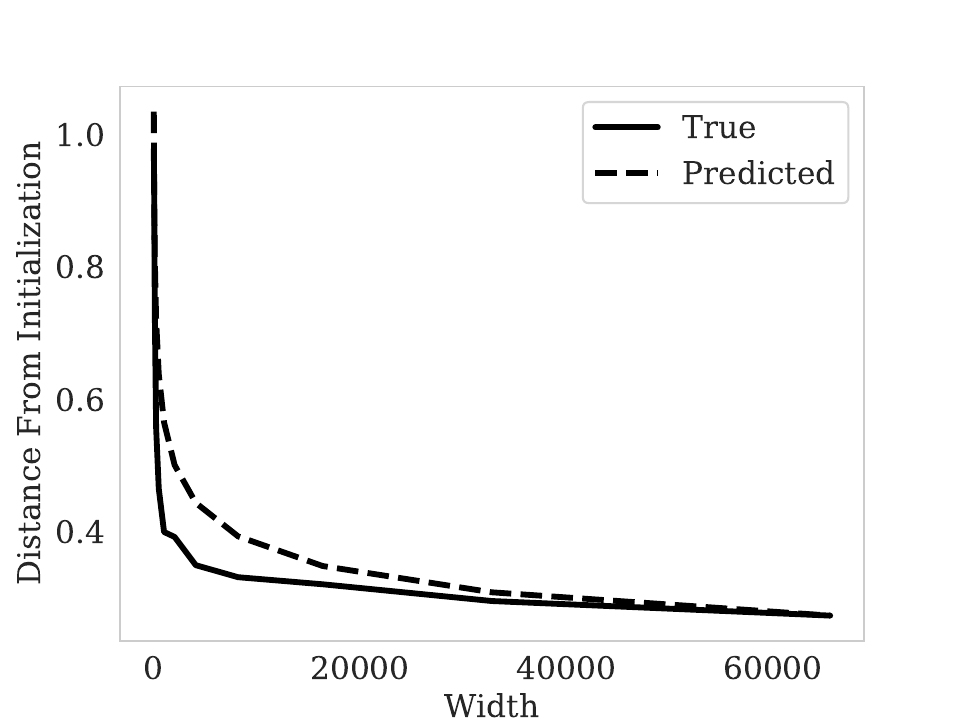}\label{fig:distmnist}}\hfill
    \subfloat[Fashion MNIST]{\includegraphics[width=0.48\textwidth]{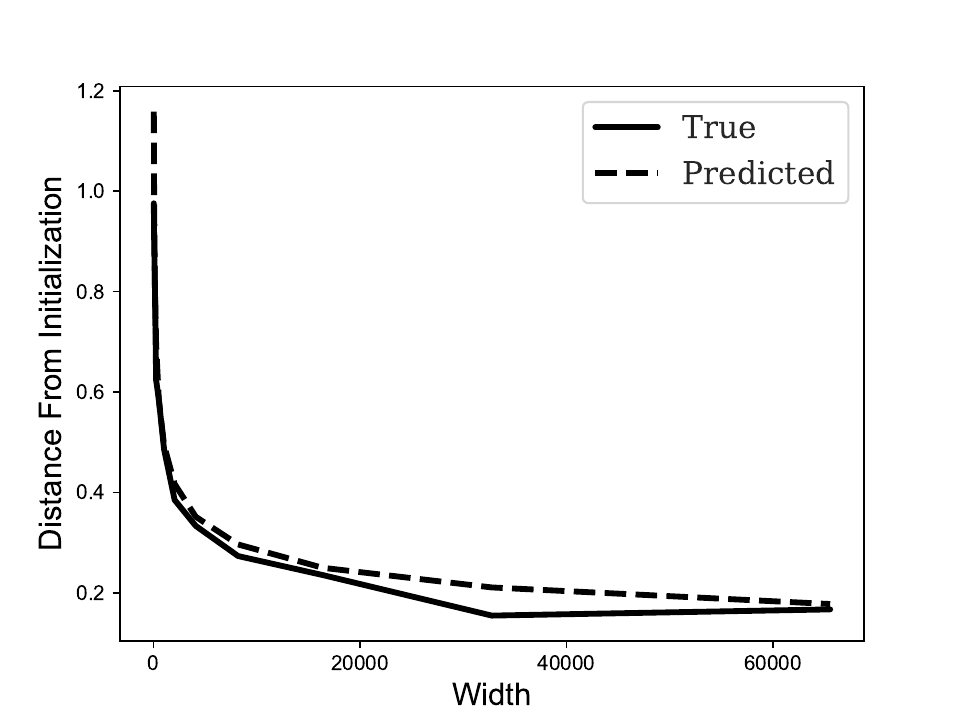}\label{fig:distcifar}}
    
    \caption{We plot the distance from initialization for both Rotated MNIST and Rotated Fashion MNIST experiments. We see that distance from initialization decreases slowly as the width is increased for both datasets. For the constants discussed in \Cref{ass:distance}, the best fitting constants are $\gamma = 0.013, \beta=0.311$ for Rotated MNIST and $\gamma = 2.5, \beta=0.12$ for Fashion MNIST. We plot the predicted relationship with such parameters from \Cref{ass:distance}.}

    \label{fig:distancefrominit}
\end{figure*}

We develop the theoretical connection between the width of the intermediate layers $W$ and the error of continual learning $\epsilon_{t, t^{\prime}}$ for different task indices $t$ and $t^{\prime}$. We begin by stating our final theorem and then provide a brief proof sketch. We mention a complete proof in \Cref{sec:theorem51}.

\subsection{Main Theorem}
Here, we present our full theorem, \Cref{lem:ourperturbation}
\begin{restatable}{theorem}{ourperturbation}
    \label{lem:ourperturbation}
     (Informal) Say we generate a series of models $\mathbf{M}_1, \dots, \mathbf{M}_T$ by training sequentially on datasets $\mathcal{D}_1, \dots, \mathcal{D}_T$ according to \Cref{sec:training_setup}. Let $\lambda_{i, j}^l = \frac{\|\mathbf{A}_{l, j}[\mathcal{A}_{l, i}]\|_2}{\|\mathbf{A}_{l, i}[\mathcal{A}_{l, i}]\|_2}$ denote the ratio of the spectral norms of the weights of different row indices for different tasks. Moreover, let $\bar{\lambda} = \underset{l \in [L], i, j \in [T]}{\max}\lambda_{i, j}^l$. For all input vectors from the $t$th dataset $\forall x \in \mathcal{D}_t$, the $\ell_2$ norm of the difference of the outputs from models $\mathbf{M}_t$ and $\mathbf{M}_{t^{\prime}}$ such that $t^{\prime} \geq t$ are upper bounded \footnote{We can reduce the dependence on weight norms by using noise stability properties. For more details, please see \Cref{sec:noisestable}.} by
    \begin{align}
        \mathbb{E}\bigg[\|&\mathbf{M}_t(x) - \mathbf{M}_{t^{\prime}}(x) \|_2 \bigg]=   \nonumber\\
        &\mathcal{O}\left((t^{\prime} - t)L2^L\bar{\lambda}\chi \left(\prod_{l=1}^L L_l\|\mathbf{A}_{t, l}\|_2 \right) \gamma W^{-\beta} \alpha^{\frac{1-2\beta}{2}}\right)\text{.}\nonumber
    \end{align}
    Here, $\chi$ denotes the maximum norm of the input in $\mathcal{D}_t$, i.e. $\chi = \underset{x \in \mathcal{D}_t}{\max} \|x\|_2$. Here, $\gamma, \beta$ are data-dependent positive real values. 
\end{restatable}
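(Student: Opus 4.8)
The plan is to reduce the $t$-to-$t^{\prime}$ comparison to a telescoping sum of single-task perturbations, to control each perturbation by propagating weight differences through the nonlinear layers, and finally to bound the per-layer weight movement using the empirical distance-from-initialization scaling of \Cref{ass:distance}. The multi-task and single-step aspects decouple cleanly, so the real work concentrates in the deep nonlinear perturbation bound and in the sparsity accounting.

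First I would telescope over the intermediate models. Since evaluation happens on $x \in \mathcal{D}_t$, both $\mathbf{M}_t$ and $\mathbf{M}_{t^{\prime}}$ reuse the task-$t$ input and output layers, so the two networks differ only in how the intermediate weights were updated while training on tasks $t+1, \dots, t^{\prime}$. Writing $\mathbf{M}^{(s)}$ for the hybrid network that keeps the task-$t$ input/output layers but installs the intermediate weights as they stood after training task $s$, the triangle inequality gives $\|\mathbf{M}_t(x) - \mathbf{M}_{t^{\prime}}(x)\|_2 \le \sum_{s=t}^{t^{\prime}-1}\|\mathbf{M}^{(s)}(x) - \mathbf{M}^{(s+1)}(x)\|_2$, which produces the $(t^{\prime}-t)$ factor and isolates the single-task weight increments $\Delta \mathbf{A}_{l} = \mathbf{A}_{s+1,l} - \mathbf{A}_{s,l}$.

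Next I would bound a single summand by a layer-wise perturbation argument. Feeding $x$ (with $\|x\|_2 \le \chi$) forward, a perturbation introduced at layer $l$ is scaled by the upstream signal and then propagates through the downstream layers, each contributing a factor $L_j\|\mathbf{A}_{j}\|_2$ by Lipschitz-smoothness of $\phi_j$ and submultiplicativity of the spectral norm. Accumulating contributions across the $L$ layers yields the depth factor $L$ together with the product $\prod_{l} L_l \|\mathbf{A}_{t,l}\|_2$, while bounding the full expansion of the perturbed composition against the unperturbed one (each layer contributing either its original or its perturbed weight) supplies the combinatorial $2^L$. Re-expressing the task-$s$ weight norms that appear in this step in terms of the task-$t$ weight norms is precisely what introduces the spectral-norm ratios $\lambda_{i,j}^l$, uniformly bounded by $\bar{\lambda}$.

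Finally I would control $\|\Delta\mathbf{A}_l\|_2$ through \Cref{ass:distance}. Because training only moves the active rows and every trained model sits near the shared random initialization, the triangle inequality through initialization bounds $\|\Delta\mathbf{A}_l[\mathcal{A}]\|_2$ by twice the assumed distance-from-initialization; that assumed distance scales with the effective width $\alpha W$ like $(\alpha W)^{-\beta}$, and passing from the restricted Frobenius norm to the spectral norm in expectation over the random active set of density $\alpha$ contributes an additional $\sqrt{\alpha}$, so together they give $\gamma W^{-\beta}\alpha^{(1-2\beta)/2}$; taking the expectation over the active-row selection then yields the stated $\mathbb{E}[\cdot]$ bound. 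The hard part will be the nonlinear, multi-layer perturbation propagation combined with the sparsity bookkeeping: the active row sets differ across tasks, so the shared-initialization step and the derivation of the exact sparsity exponent $\tfrac{1-2\beta}{2}$ must be carried out carefully in expectation, and one must check that the $2^L$ expansion interacts benignly with the per-layer norm ratios collected into $\bar{\lambda}$.
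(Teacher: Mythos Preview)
Your overall strategy matches the paper's: a Neyshabur-style layer-by-layer perturbation propagation to get the $L2^L\chi\prod_l L_l\|\mathbf{A}_{t,l}\|_2$ factor, a telescope over intermediate tasks to get $(t'-t)$, and \Cref{ass:distance} for the per-step weight movement. One harmless structural difference is that you telescope at the model-output level, whereas the paper telescopes at the weight level: it sets $\mathbf{U}_{t,l}=\mathbf{A}_{t,l}[\mathcal{A}_{t,l}]-\mathbf{A}_{t',l}[\mathcal{A}_{t,l}]$, runs the perturbation induction once with this total perturbation, and only afterwards bounds $\|\mathbf{U}_{t,l}\|_2\le\sum_{s=t}^{t'-1}\|\mathbf{A}_{s,l}-\mathbf{A}_{s+1,l}\|_2$. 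Either order works.

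There is, however, a genuine gap in your sparsity step. First, there is no ``shared random initialization'' to triangle through in this setup: $\mathbf{M}_{s+1}$ is trained starting from $\mathbf{M}_s$, so $\mathbf{A}_{s,l}$ \emph{is} the initialization for task $s+1$, and \Cref{ass:distance} already gives $\|\mathbf{A}_{s+1,l}[\mathcal{A}_{s+1,l}]-\mathbf{A}_{s,l}[\mathcal{A}_{s+1,l}]\|_F\le\gamma(\alpha W)^{-\beta}\|\mathbf{A}_{s,l}[\mathcal{A}_{s+1,l}]\|_2$ directly---no detour and no factor of two. Second, and more importantly, the extra $\sqrt{\alpha}$ does \emph{not} come from ``passing from the restricted Frobenius norm to the spectral norm'': that inequality runs $\|\cdot\|_2\le\|\cdot\|_F$ and produces no gain. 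The paper's mechanism is \Cref{lem:intersection} together with row exchangeability. Evaluation on task $t$ uses only rows in $\mathcal{A}_{t,l}$, while training on task $s+1$ moves only rows in $\mathcal{A}_{s+1,l}$; hence the difference $\mathbf{A}_{s+1,l}[\mathcal{A}_{t,l}]-\mathbf{A}_{s,l}[\mathcal{A}_{t,l}]$ is supported on the intersection, of expected size $\alpha^2 W$. By exchangeability each active row carries an equal $1/(\alpha W)$ share of the squared Frobenius movement, so summing over only $\alpha^2 W$ of the $\alpha W$ rows contributes the factor $\sqrt{\alpha^2 W/(\alpha W)}=\sqrt{\alpha}$, which combined with $(\alpha W)^{-\beta}$ gives $W^{-\beta}\alpha^{(1-2\beta)/2}$. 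Your plan as written does not contain this intersection-plus-exchangeability argument, and without it you will not recover the stated sparsity exponent.
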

As the width of the layers increases, we see that the error $\epsilon_{t, t^{\prime}}$ is decreased on the order of $W^{-\beta}$. This formalization furthermore formalizes the connection between the width and the continual learning error $\epsilon_{t,t^{\prime}}$. This theorem demonstrates that the continual learning error will decrease slowly as the width increases. As side effects of our analysis, we also make explicit how this error scales over tasks. As the number of datasets trained between $\mathcal{D}_{t^{\prime}}$ and $\mathcal{D}_t$ increases, the continual learning error increases linearly. To our knowledge, this linear increase in error over tasks is unique to this analysis. Moreover, we have an exponential dependence on $L$, denoting that increasing depth decreases continual learning ability. \citet{mirzadeh2022architecture} also found that forgetting increased as depth increased. We reproduce these experiments on our datasets. Moreover, as the sparsity coefficient is decreased, meaning the chance of a row being active is decreased, the continual learning ability is increased as long as $\frac{1 - 2\beta}{2}$ is positive, which is often the case, in practice. Thus, this suggests a tradeoff between continual learning ability and accuracy, as increasing sparsity will increase continual learning ability but decrease the capacity of the model to learn on a given task. We note some dependence on the layers' matrix norms, which may correlate with the layers' width, but this is usually initialization dependent \citep{nagarajan2019generalization}. We can change the dependence on the weight norms using the noise stability property of neural networks as observed in \citet{pmlr-v80-arora18b} (for more details, please see \Cref{sec:noisestable}). Overall, this theorem makes the relationship between several model parameters, such as width and continual learning, concrete and makes the diminishing returns more explicit.

\subsection{Proof Sketch}
Here, we present a brief proof sketch of \Cref{lem:ourperturbation} and the intuition behind it. We will begin this proof by finding the continual learning error between subsequentially trained models $\mathbf{M}_{t}$ and $\mathbf{M}_{t+1}$, i.e. $\epsilon_{t, t+1}$, and then scale the analysis to work over more tasks $t^{\prime} \geq t+1$. To analyze $\epsilon_{t, t+1}$, we split the proof into three parts: (1) finding how many active rows are shared between layers in subsequential models at the same position, (2) finding how far these active rows can change during training, and (3) combining the two parts using perturbation analysis.

\subsubsection{Number of Shared Active Rows}

From \Cref{sec:training_setup}, each row in a layer $\mathbf{A}_{t,l}$ is active with probability $\alpha$ and inactive with probability $1 - \alpha$. Moreover, the activity of the rows are independent and identical random variables. Therefore, over the randomness of row selection, the expected number of shared active rows between two consecutive models is shown in \Cref{lem:intersection}.
 \begin{restatable}{lemma}{intersection}
\label{lem:intersection}
For any two sequential task indices $t$ and $t+1$ and layer $l$, the expected size of the intersection between the sets of active rows $\mathcal{A}_{t, l}$ and $\mathcal{A}_{t+1, l}$ is  $\mathbb{E}(|\mathcal{A}_{t, l} \cap \mathcal{A}_{t+1, l}|) =  \alpha^2 W\text{.}$
\end{restatable}
\subsubsection{Distance between active rows after training}
For rows in weight matrices in $\mathbf{M}_t$ and $\mathbf{M}_{t+1}$ that are active for both datasets $\mathcal{D}_t$ and $\mathcal{D}_{t+1}$, we need to bound their distance. Here, we use a critical empirical observation. As observed in \citet{zou2020gradient, nagarajan2019generalization, li2018learning, neyshabur2018towards, chizat2019lazy}, as the width of the neural network was increased, the distance from initialization was observed to decrease. Similar intuition on a "lazy training regime" was presented in \citet{Mirzadeh2022}. Still, they did not offer any formal analysis based on this observation. Intuitively, as the width of the neural network increases, the implicit regularization of standard gradient-based learning methods finds models closer to the initialization. While it is challenging to provide theoretical analyses of the connection between width, implicit regularization, and the distance from initialization save for restricted settings \citep{li2018learning}, empirically, this is a well-observed phenomenon. For the sake of our analysis, we take this as an assumption in \Cref{ass:distance} and build our analysis on top of it.
\begin{restatable}{assumption}{distance}
    \label{ass:distance}
    Let $t$ be any task index $t \in [T]$. After training on dataset $\mathcal{D}_{t + 1}$ with initialization $\mathbf{M}_{t}$ via gradient-based learning methods to generate $\mathbf{M}_{t+1}$, for all layers $l$, we have that $$\frac{\|\mathbf{A}_{l, t+1}[\mathcal{A}_{l, t+1}] - \mathbf{A}_{l, t}[\mathcal{A}_{l, t+1}]\|_F}{\|\mathbf{A}_{l, t}[\mathcal{A}_{l, t+1}]\|_2} \leq \gamma|\mathcal{A}_{l, t+1}|^{-\beta}$$ where $W$ is the width of the layers of $\mathbf{M}$ where $\gamma, \beta > 0$. Here, $\mathbf{A}_{l, t}[\mathcal{A}_{l, t+1}]$ and $\mathbf{A}_{l, t+1}[\mathcal{A}_{l, t+1}]$ denote matrices that only contain the active rows from the weights at the $l$th layer for the tasks $t$ and $t+1$ respectively. 
\end{restatable}
This assumption states that after training when indexed by the active rows, the matrix norm of the difference between the $l$th layer of $\mathbf{M}_t$ and the $l$th layer of $\mathbf{M}_{i+1}$ normalized by the initial matrix norm is bounded by some function which is inversely proportional to the width. This relative distance from initialization is well analyzed in the literature. For more details on these values in practice, see \Cref{sec:distfrominit}. We reiterate that this phenomenon has been observed across the literature, and we reproduce this property empirically as seen in \Cref{fig:distancefrominit}. Using this assumption, we can demonstrate an upper bound in the matrix norm of the difference of the weights at a layer for two sequentially trained models. 
\begin{restatable}{lemma}{perturbanytask}
    \label{lem:perturbanytask}
    Let $\lambda_{i, j}^l = \frac{\|\mathbf{A}_{l, j}[\mathcal{A}_{l, i}]\|_2}{\|\mathbf{A}_{l, i}[\mathcal{A}_{l, i}]\|_2}$ denote the ratio of the spectral norms of the weights of different row indices for different tasks. For any task $t$  and layer $l$, we have 
    $$\mathbb{E}\left[\frac{\left\|\mathbf{A}_{t, l}[\mathcal{A}_{t, l}] - \mathbf{A}_{t+1, l}[\mathcal{A}_{t, l}]\right \|_2}{\|\mathbf{A}_{t, l}[\mathcal{A}_{t, l}]\|_2} \right]\leq  \lambda_{t, t+1}^l\gamma W^{-\beta} \alpha^{\frac{1-2\beta}{2}} \text{.}$$ 
\end{restatable}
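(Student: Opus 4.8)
The plan is to decompose the active rows of task $t$ according to their status in task $t+1$, exploit the fact that rows inactive during training are frozen, and then transfer \Cref{ass:distance} (which is stated over the full active set $\mathcal{A}_{t+1,l}$ of task $t+1$) onto the rows shared between the two tasks. First I would split $\mathcal{A}_{t,l}$ into $\mathcal{A}_{t,l}\cap\mathcal{A}_{t+1,l}$ and $\mathcal{A}_{t,l}\setminus\mathcal{A}_{t+1,l}$. By the training setup of \Cref{sec:training_setup}, a row that is inactive while training on task $t+1$ is never updated, so every row in $\mathcal{A}_{t,l}\setminus\mathcal{A}_{t+1,l}$ is identical in $\mathbf{A}_{t,l}$ and $\mathbf{A}_{t+1,l}$ and contributes a zero row to the difference $\mathbf{A}_{t,l}[\mathcal{A}_{t,l}]-\mathbf{A}_{t+1,l}[\mathcal{A}_{t,l}]$. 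Since appending or deleting zero rows leaves the spectral norm unchanged, I can replace the index set $\mathcal{A}_{t,l}$ by the intersection $\mathcal{A}_{t,l}\cap\mathcal{A}_{t+1,l}$ without altering $\|\mathbf{A}_{t,l}[\mathcal{A}_{t,l}]-\mathbf{A}_{t+1,l}[\mathcal{A}_{t,l}]\|_2$, and then bound that spectral norm by the corresponding Frobenius norm.

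Next I would connect this to the assumption. Writing $D=\mathbf{A}_{t,l}[\mathcal{A}_{t+1,l}]-\mathbf{A}_{t+1,l}[\mathcal{A}_{t+1,l}]$ for the difference taken over all of task $t+1$'s active rows, the intersection is obtained from $\mathcal{A}_{t+1,l}$ by independently keeping each row with probability $\alpha$, since the task-$t$ activity of a row is an independent Bernoulli coin. Because the squared Frobenius norm is the sum of squared row norms, averaging over this row selection gives $\mathbb{E}\|D[\mathcal{A}_{t,l}\cap\mathcal{A}_{t+1,l}]\|_F^2=\alpha\,\|D\|_F^2$, so restricting to the intersection costs a factor of $\sqrt{\alpha}$ in expectation after an application of Jensen's inequality. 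Applying \Cref{ass:distance} bounds $\|D\|_F$ by $\gamma\,|\mathcal{A}_{t+1,l}|^{-\beta}\,\|\mathbf{A}_{t,l}[\mathcal{A}_{t+1,l}]\|_2$, and using $\mathbb{E}|\mathcal{A}_{t+1,l}|=\alpha W$ replaces $|\mathcal{A}_{t+1,l}|^{-\beta}$ by $(\alpha W)^{-\beta}=\alpha^{-\beta}W^{-\beta}$. Multiplying the $\sqrt{\alpha}$ from subsampling by $\alpha^{-\beta}$ yields exactly $\alpha^{\frac{1-2\beta}{2}}W^{-\beta}$. Finally, dividing through by $\|\mathbf{A}_{t,l}[\mathcal{A}_{t,l}]\|_2$ and recognizing $\|\mathbf{A}_{t,l}[\mathcal{A}_{t+1,l}]\|_2/\|\mathbf{A}_{t,l}[\mathcal{A}_{t,l}]\|_2=\lambda_{t,t+1}^l$ produces the claimed bound.

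The main obstacle is handling these expectations rigorously rather than heuristically. Two couplings require care: the trained weight rows constituting $D$ depend on the same task-$t$ activation coins that also determine the subsampling set, so the clean identity $\mathbb{E}\|D[\mathcal{A}_{t,l}\cap\mathcal{A}_{t+1,l}]\|_F^2=\alpha\|D\|_F^2$ must be justified by conditioning on the weights before averaging over the selection, or by arguing the row values are effectively independent of their selection indicators; and because $x\mapsto x^{-\beta}$ is convex, replacing the random $|\mathcal{A}_{t+1,l}|^{-\beta}$ by $(\alpha W)^{-\beta}$ is not an equality and should be supported by concentration of $|\mathcal{A}_{t+1,l}|$ around its mean $\alpha W$. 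I expect the remaining ingredients — the zero-row invariance of the spectral norm, the spectral-versus-Frobenius bound, and the final $\lambda_{t,t+1}^l$ substitution — to be routine.
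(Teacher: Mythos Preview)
Your proposal is correct and follows essentially the same approach as the paper: split $\mathcal{A}_{t,l}$ into rows shared with $\mathcal{A}_{t+1,l}$ and rows frozen during task $t{+}1$, drop the zero rows, bound the spectral norm by the Frobenius norm, invoke \Cref{ass:distance} on $\mathcal{A}_{t+1,l}$, transfer to the intersection via a subsampling/exchangeability argument and Jensen, then substitute $\lambda_{t,t+1}^l$. The only cosmetic difference is that the paper phrases the subsampling step as ``rows are exchangeable under training, so the expected per-row squared norm is the total divided by $\alpha W$, then multiply by $|\mathcal{I}|=\alpha^2 W$,'' whereas you phrase it as ``each row of $\mathcal{A}_{t+1,l}$ lands in the intersection independently with probability $\alpha$, so $\mathbb{E}\|D[\mathcal{I}]\|_F^2=\alpha\|D\|_F^2$''; these are the same computation. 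The two obstacles you flag (coupling between $D$ and the task-$t$ coins, and replacing $|\mathcal{A}_{t+1,l}|^{-\beta}$ by $(\alpha W)^{-\beta}$) are handled in the paper exactly as loosely as you describe them, so you are not missing a cleaner resolution.
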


The matrix norm of the difference between two layers measures the functional distance of the two layers. Given this upper bound on the matrix norm, we can use simple perturbation analysis to calculate how much the difference between the two outputs of the two sequentially trained models accumulates throughout the layers during inference. 
\subsubsection{Width as a Functional Reguralizer}
We now use simple perturbation analysis to see how far $\mathbf{M}_t$ and $\mathbf{M}_{t^{\prime}}$ will be on input from $\mathcal{D}_t$. Given the Lipschitz-smoothness of the activation functions and a bound on the matrix norm of the weights of different layers from \Cref{lem:perturbanytask}, we can use Lemma 2 from \citet{neyshabur2018towards}, to bound the continual learning error. This analysis gives us a simple guarantee on the error $\epsilon_{t, t+1}$. Moreover, using simple triangle inequality, we expand this claim to general $\epsilon_{t, t^{\prime}}$ where $t^{\prime}> t$. We state the continual learning error between subsequent tasks using this analysis technique. 
\begin{restatable}{lemma}{singleperturb}
    \label{lem:singleperturb}
        Let $\bar{\lambda} = \underset{l \in [L], i, j \in [T]}{\max}\lambda_{i, j}^l$. For any input vector from the dataset for the $t$th task $x \in \mathcal{D}_t$, the $\ell_2$ norm of the difference of the outputs from models $\mathbf{M}_t$ and $\mathbf{M}_{t+1}$ are upper bounded by $\forall x \in \mathcal{D}_t$,
    \begin{align}
        \mathbb{E}\bigg[\|\mathbf{M}_t&(x) - \mathbf{M}_{t+1}(x) \|_2 \bigg]\leq \nonumber\\
    &L2^L\chi\bar{\lambda} \left(\prod_{l=1}^L L_l\|\mathbf{A}_{t, l}\|_2 \right) \gamma W^{-\beta} \alpha^{\frac{1-2\beta}{2}}\text{.}\nonumber
    \end{align}
    Here, $\chi$ denotes the maximum norm of the input in $\mathcal{D}_t$, i.e. $\chi = \underset{x \in \mathcal{D}_t}{\max} \|x\|_2$.
\end{restatable}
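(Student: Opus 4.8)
The plan is to reduce the bound on $\|\mathbf{M}_t(x) - \mathbf{M}_{t+1}(x)\|_2$ to the per-layer weight-perturbation bounds already supplied by \Cref{lem:perturbanytask}, and then accumulate the layerwise contributions along the network. Write $U_l = \mathbf{A}_{t,l} - \mathbf{A}_{t+1,l}$ for the perturbation at layer $l$. Because inactive rows are frozen during training (\Cref{sec:training_setup}), $U_l$ is supported on the rows active for task $t$, so $\|U_l\|_2 = \|\mathbf{A}_{t,l}[\mathcal{A}_{t,l}] - \mathbf{A}_{t+1,l}[\mathcal{A}_{t,l}]\|_2$, and \Cref{lem:perturbanytask} controls the expected relative perturbation $\mathbb{E}[\|U_l\|_2/\|\mathbf{A}_{t,l}[\mathcal{A}_{t,l}]\|_2] \le \lambda_{t,t+1}^l\,\gamma W^{-\beta}\alpha^{(1-2\beta)/2}$. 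Since $\mathbf{M}_t$ and $\mathbf{M}_{t+1}$ share the same architecture and differ only in these weights, this is exactly the setting of the perturbation lemma of \citet{neyshabur2018towards}.

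The core step is a hybrid argument over layers. Denoting by $h_l$ and $h_l'$ the post-activation signals of $\mathbf{M}_t$ and $\mathbf{M}_{t+1}$ at layer $l$, the $L_l$-Lipschitzness of $\phi_l$ gives the recursion
\[
\|h_l - h_l'\|_2 \le L_l\|\mathbf{A}_{t,l}\|_2\,\|h_{l-1}-h_{l-1}'\|_2 + L_l\|U_l\|_2\,\|h_{l-1}'\|_2.
\]
The first term propagates the accumulated error, while the second injects the fresh perturbation at layer $l$, scaled by the forward signal norm $\|h_{l-1}'\|_2 \le \chi\prod_{j<l}L_j\|\mathbf{A}_{t+1,j}\|_2$. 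Unrolling this recursion produces, for each layer $l$, a term of the form $\chi\,(\prod_{j\ne l}\cdots)\,L_l\|U_l\|_2$; factoring out the full product $\prod_{j=1}^L L_j\|\mathbf{A}_{t,j}\|_2$ leaves the relative perturbation $\|U_l\|_2/\|\mathbf{A}_{t,l}\|_2$ together with a residual of ratios $\|\mathbf{A}_{t+1,j}\|_2/\|\mathbf{A}_{t,j}\|_2$, each of which I would bound by $\bar\lambda$. Summing the $L$ layerwise terms yields the factor $L$, and the compounding of these bounded ratios across the at-most-$L$ hybrid steps is what produces the constant $2^L$; equivalently, once the relative perturbations are verified to be small, one may invoke Lemma~2 of \citet{neyshabur2018towards} directly and absorb the compounding into its constant.

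Finally I would take the expectation over the random choice of active rows. Since the unrolled bound is a sum of $L$ terms, linearity of expectation lets me treat each layer separately and insert the expected relative-perturbation bound from \Cref{lem:perturbanytask}, after replacing every $\lambda_{t,t+1}^l$ by the uniform bound $\bar\lambda$. The main obstacle I anticipate is twofold: generalizing the ReLU-based perturbation lemma of \citet{neyshabur2018towards} to general $L_l$-Lipschitz activations while honestly tracking the constant down to $2^L$, and handling the expectation cleanly, since the signal-norm products $\prod_j\|\mathbf{A}_{t,j}\|_2$ and the relative perturbations are not independent of the active-row selection. I would address the latter by pulling the deterministic full-matrix spectral norms $\prod_{l}\|\mathbf{A}_{t,l}\|_2$ outside the expectation and applying \Cref{lem:perturbanytask} only to the genuinely random relative-perturbation factors, at the cost of a slightly looser but still correct constant.
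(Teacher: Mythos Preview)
Your approach is the same as the paper's: a Neyshabur-style layer-by-layer perturbation induction, followed by taking expectation and inserting \Cref{lem:perturbanytask}. The one place your bookkeeping diverges is the origin of $\bar\lambda$ versus $2^L$. In the paper the factor $2^L$ comes from bounding $\|\mathbf{A}_{t,l}[\mathcal{A}_{t,l}]\|_2+\|U_l\|_2\le 2\|\mathbf{A}_{t,l}[\mathcal{A}_{t,l}]\|_2$ at each inductive step (equivalently, from invoking Lemma~2 of \citet{neyshabur2018towards} directly, as you also suggest), and the \emph{single} factor of $\bar\lambda$ enters only through $\lambda_{t,t+1}^l\le\bar\lambda$ inside \Cref{lem:perturbanytask}. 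Your plan to also bound the ratios $\|\mathbf{A}_{t+1,j}\|_2/\|\mathbf{A}_{t,j}\|_2$ by $\bar\lambda$ is both a mismatch of definitions ($\bar\lambda$ is defined on active-row submatrices, not full weights) and would compound to $\bar\lambda^{L-1}$ rather than a single $\bar\lambda$, so that route does not reproduce the stated constant. The fix is exactly what the paper does: control the forward signal through model $t$'s weights via $\|h_{l-1}'\|_2\le\|h_{l-1}\|_2+\|h_{l-1}-h_{l-1}'\|_2$, so that no $\|\mathbf{A}_{t+1,j}\|/\|\mathbf{A}_{t,j}\|$ ratios ever appear; what remains is the $2^L$ from the recursion and the lone $\bar\lambda$ from \Cref{lem:perturbanytask}. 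Your handling of the expectation (pulling the deterministic full-matrix norms $\prod_l\|\mathbf{A}_{t,l}\|_2$ outside and applying \Cref{lem:perturbanytask} to the random relative perturbations) matches the paper as well.
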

This bound states that the difference between outputs in two models can be tracked as the input flows through the models. If the two models have close weights at each layer, they will also have close outputs. Here, we see how width and sparsity cause the layers to be similar, resulting in low continual learning error between two sequential models. Using such a bound, we can prove that the difference between the original and the new model differs only so much. Moreover, we can extend the above to compare models more than one task apart, i.e., comparing $\mathbf{M}_t$ and $\mathbf{M}_{t^{\prime}}$. Doing so yields our final theorem. The implicit regularization of training large-width networks to find minima close to initialization acts as a functional regularization in the continual learning setting on the order of $W^{-\beta}$.

\subsection{Extension to Noise Stability}
\label{sec:noisestable}
The dependence on the weight norm is pessimistic in our bounds since neural networks tend to be stable to noise in the input \citep{pmlr-v80-arora18b}. To remove such dependence, we can rely on two terms: layer cushion and activation contraction. 
\begin{restatable}{definition}{layercushion}
    \label{def:layer cushion}
    Let $x_{t, l-1}$ be defined as the output of the first $l-1$ layers of $\mathbf{M}_t$ for some input $x \in \mathcal{D}_t$. The layer cushion of layer $l$ of the model $\mathbf{M}_t$ on the task $t$ is defined to be the smallest number $\mu_{t, l}$ such that for all inputs in $x \in \mathcal{D}_t$, we have that 
    $\|\mathbf{A}_{t, l}\|_2 \|\phi_l(x_{t, l-1}) \|_2 \leq \mu_{t, l}\|\mathbf{A}_{t, l}\phi_l(x_{t, l-1})\|_2 \text{.}$
\end{restatable}
Intuitively, the layer cushion constant $\mu_{t, l}$ is a data-dependent constant that tightens the pessimistic analysis of using the weight norms. Moreover, we will define the activation contraction similarly as the following.
\begin{restatable}{definition}{activationcontraction}
    \label{def:activation contraction}
    Let $x_{t, l-1}$ be defined as the output of the first $l-1$ layers of $\mathbf{M}_t$ for some input $x \in \mathcal{D}_t$. The activation contraction $c_{t}$ for layer $l$ and model $\mathbf{M}_t$ is defined as the smallest number such that for any layer $l$ and any $x \in \mathcal{D}_t$ such that
    $\|x_{t, l-1}\|_2 \leq c_t\|\phi_l(x_{t, l-1})\|_2\text{.}$
\end{restatable}
This connection tightens the pessimistic analysis surrounding the activation layers. Combining these two definitions, we can remove the dependence on the weight norm in our bounds. 
\begin{restatable}{theorem}{smoothenedproof}
\label{thm:smoothenedproof}
    Denote $\Gamma_t = \underset{x \in \mathcal{D}_t}{\max}  \|\mathbf{M}_t(x)\|_2$. Then, we can characterize the continual learning error between two subsequently trained models as
    \begin{align}
    \mathbb{E}\left[\|\mathbf{M}_t(x) - \mathbf{M}_{t^{\prime}}(x) \|_2\right] &\leq \Gamma_t(t^{\prime} - t)\gamma\bar{\lambda} W^{-\beta} \alpha^{\frac{1-2\beta}{2}}\eta \text{,} \nonumber
    \end{align}
    where $\eta = \left(\prod_{i=1}^l \kappa_i +\kappa_i(t^{\prime} - t)\gamma\bar{\lambda}\mu_{t,i} \right)\left(\sum_{i=1}^l \kappa_i\right)$ and $\kappa_i = L_ic_i\mu_{i, t}$.
\end{restatable}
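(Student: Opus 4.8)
The plan is to follow the noise-stability perturbation framework of \citet{pmlr-v80-arora18b}, replacing the crude product of spectral norms $\prod_{l=1}^L L_l\|\mathbf{A}_{t,l}\|_2$ appearing in \Cref{lem:singleperturb} by the data-dependent cushion and contraction constants of \Cref{def:layer cushion} and \Cref{def:activation contraction}. As in the proof of \Cref{lem:ourperturbation}, I would first establish the bound for two consecutive models $\mathbf{M}_t$ and $\mathbf{M}_{t+1}$ and then extend to arbitrary $t' \geq t$ by the triangle inequality, which is what ultimately produces the outer factor $(t'-t)$. The per-layer input to the argument is \Cref{lem:perturbanytask}, which controls the relative spectral change $\|\mathbf{A}_{t,l}[\mathcal{A}_{t,l}] - \mathbf{A}_{t+1,l}[\mathcal{A}_{t,l}]\|_2 / \|\mathbf{A}_{t,l}[\mathcal{A}_{t,l}]\|_2$ by $\lambda_{t,t+1}^l \gamma W^{-\beta}\alpha^{(1-2\beta)/2}$; the whole point of the cushion machinery is to thread this relative weight change through the network while measuring the resulting output error \emph{relative to the signal} rather than in absolute spectral units.

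Second, I would analyze the effect of perturbing a single layer $l$ with all other layers held fixed. Writing $x_{t,l-1}$ for the output of the first $l-1$ layers of $\mathbf{M}_t$, the immediate perturbation of the pre-activation at layer $l$ has norm at most $\|\mathbf{A}_{t,l}-\mathbf{A}_{t+1,l}\|_2\,\|\phi_l(x_{t,l-1})\|_2$. The key step is to apply the layer cushion to convert $\|\mathbf{A}_{t,l}\|_2\,\|\phi_l(x_{t,l-1})\|_2$ into $\mu_{t,l}\|\mathbf{A}_{t,l}\phi_l(x_{t,l-1})\|_2$, so that this immediate error is expressed as a multiple of the genuine signal $\|x_{t,l}\|_2$ times the relative weight change from \Cref{lem:perturbanytask}. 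Propagating this perturbation through each subsequent layer $i>l$ then contributes a Lipschitz factor $L_i$, and invoking the activation contraction together with the layer cushion at layer $i$ collapses the accumulated $\|\mathbf{A}_{i,t}\|_2$ into the single amplification constant $\kappa_i = L_i c_i \mu_{i,t}$. This is precisely the mechanism that trades the pessimistic spectral product for $\prod_i \kappa_i$ and yields the output-relative normalization $\Gamma_t = \max_{x}\|\mathbf{M}_t(x)\|_2$.

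Third, I would sum the single-layer contributions over all $L$ layers, producing the additive factor $\sum_{i=1}^{L}\kappa_i$ in $\eta$. Because $\mathbf{M}_t$ and $\mathbf{M}_{t'}$ differ at \emph{every} layer simultaneously rather than one at a time, I would account for the higher-order interaction terms by an inductive bound on the propagated error: at each layer the incoming error is both passed forward (the factor $\kappa_i$) and compounded by the fresh relative perturbation injected at that layer, which by \Cref{lem:perturbanytask} and a triangle inequality across the $t'-t$ intermediate training steps carries a factor $(t'-t)\gamma\bar{\lambda}\mu_{t,i}$. Unrolling this recursion gives the product $\prod_{i=1}^{L}\bigl(\kappa_i + \kappa_i(t'-t)\gamma\bar{\lambda}\mu_{t,i}\bigr)$, matching the stated form of $\eta$. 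Taking expectations, substituting the bound of \Cref{lem:perturbanytask}, and using $\bar{\lambda} = \max_{l,i,j}\lambda_{i,j}^l$ then assembles the claimed inequality.

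I expect the main obstacle to be the careful bookkeeping of this higher-order propagation: ensuring that the interlayer amplification constants $\kappa_i$ and the freshly injected perturbations combine into exactly the product-and-sum structure of $\eta$, rather than a looser bound, requires applying the cushion and contraction inequalities in the correct order at each layer and verifying that the normalization by $\Gamma_t$ is consistent across the telescoping recursion. A secondary subtlety is the mismatch between the Frobenius norm appearing in \Cref{ass:distance} and the spectral norm used in the propagation, which must be reconciled (as already done in \Cref{lem:perturbanytask}) before the cushion argument can be applied cleanly.
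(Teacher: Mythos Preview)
Your proposal is essentially the paper's proof: a layer-wise induction establishing $\|\mathbf{M}_{t,l}(x)-\mathbf{M}_{t',l}(x)\|_2 \le \epsilon_l\|\mathbf{M}_{t,l}(x)\|_2$, with the one-step recursion $\epsilon_l \le a_l\epsilon_{l-1}+b_l$ built from $L_l$, $c_l$, $\mu_{t,l}$, and the relative weight change $\|\mathbf{U}_{t,l}\|_2/\|\mathbf{A}_{t,l}[\mathcal{A}_{t,l}]\|_2$, and then unrolled to the product-sum form of $\eta$. Your third paragraph captures exactly this mechanism.

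There is one inconsistency to resolve. In the first paragraph you say you will prove the bound for consecutive models and then apply the triangle inequality at the \emph{output} level to get the $(t'-t)$ factor. The paper does not do this, and neither does your own third paragraph: the induction is run \emph{directly} between $\mathbf{M}_t$ and $\mathbf{M}_{t'}$ with the full perturbation $\mathbf{U}_{t,l}=\mathbf{A}_{t,l}[\mathcal{A}_{t,l}]-\mathbf{A}_{t',l}[\mathcal{A}_{t,l}]$, so that every cushion and contraction constant in the recursion is that of $\mathbf{M}_t$ on $\mathcal{D}_t$. The factor $(t'-t)$ then enters only at the end, when taking expectations of the relative weight change via the \emph{weight}-level telescope
\[
\mathbb{E}\!\left[\frac{\|\mathbf{U}_{t,l}\|_2}{\|\mathbf{A}_{t,l}[\mathcal{A}_{t,l}]\|_2}\right]\le (t'-t)\,\bar{\lambda}\,\gamma\,W^{-\beta}\alpha^{(1-2\beta)/2},
\]
exactly as already computed in the proof of \Cref{lem:ourperturbation}. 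If you instead telescoped on outputs, each summand $\|\mathbf{M}_j(x)-\mathbf{M}_{j+1}(x)\|_2$ would require cushion and contraction constants for the intermediate model $\mathbf{M}_j$ evaluated on $\mathcal{D}_t$, together with $\max_x\|\mathbf{M}_j(x)\|_2$ rather than $\Gamma_t$, and the bound would not match the statement. Resolve this in favor of your third paragraph and the argument goes through as written in the paper.
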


Empirically, such constants can improve upon the pessimistic analysis using weight norms. For more details on the empirical value of these constants, please see \citet{pmlr-v80-arora18b}.
\section{Experiments}

\begin{figure*}[t!]
\centering
\subfloat[.48\textwidth][Rotated MNIST]{

  \begin{sc}
  \begin{tabular}{ccccc}
    \toprule
    Width & AA & AF & LA & JA \\
    \midrule
    $32$ & 56.3 & 37.7 & 93.0 & 91.8\\
    $64$ & 58.7 & 36.0 & 93.5 & 93.5 \\
    $128$ & 59.8 & 35.0 & 93.8 & 94.3 \\
    $256$  & 60.9 & 34.2 & 94.0 & 94.8 \\
    $512$ & 61.9 & 33.2 & 94.1 & 95.0 \\
    $1024$ & 62.7 & 32.6 & 94.2 & 95.3 \\
    $2048$ & 64.1 & 31.2 & 94.3 & 95.5 \\
    $4096$ & 65.3 & 30.2 & 94.5 & 95.7 \\
    $8192$ & 66.7 & 28.9 & 94.7 & 95.7 \\
    $16384$ & 68.0 & 27.9 & 94.9 & 95.9 \\
    $32768$ & 69.4 & 26.6 & 95.6 & 96.1 \\
    $65536$ & 69.6 & 26.7 & 95.6 &  96.2 \\
    \bottomrule
  \end{tabular}
  \end{sc}
  }
\subfloat[.48\textwidth][Rotated Fashion MNIST]{
  \begin{sc}
  \begin{tabular}{ccccc}
    \toprule
    Width & AA & AF & LA & JA \\
    \midrule
    $32$ & 37.7 & 46.0 & 82.1 & 77.8 \\
    $64$ & 37.9 & 46.0 & 82.4 & 80.0 \\
    $128$ & 38.2 & 46.0 & 82.5 & 79.4 \\
    $256$ & 38.4 & 45.9 & 82.7 & 79.8 \\
    $512$ & 38.8 & 45.6 & 82.9 & 79.9 \\
    $1024$ & 39.3 & 45.3 & 83.1 & 79.9 \\
    $2048$ & 39.9 & 44.8 & 83.3 & 79.1 \\
    $4096$ & 40.1 & 44.9 & 83.7 & 80.9 \\
    $8192$ & 40.8 & 44.5 & 83.9 & 80.2 \\
    $16384$ & 41.4 & 44.3 & 84.5 & 78.8 \\
    $32768$ & 41.9 & 44.3 & 84.9 & 79.9 \\
    $65536$ & 42.0 & 44.6 & 85.5 & 80.9 \\
    \bottomrule
  \end{tabular}
  \end{sc}
}

\captionof{table}{Our Continual Learning experiments on varying width FFNs on Rotated MNIST and Rotated Fashion MNIST. We see that the Average Forgetting slowly stops decreasing after a width of $2^{10}$.}
\label{tab:continual learning experiments}
\end{figure*}

To empirically validate the theoretical results in this work, we conducted a series of experiments across multiple established continual learning benchmarks following prior work  \citet{Mirzadeh2022, mirzadeh2022architecture}. We find that while our theoretical framework does not contain all information about catastrophic forgetting, it is roughly accurate in its predictions. We conduct all experiments on a single AWS g5g.8xlarge instance equipped with an NVIDIA A10 GPU.

\subsection{Datasets and Metrics}

Following the literature, we train models on four datasets: Rotated MNIST, Rotated SVHN, Rotated Fashion MNIST, and Rotated GTSRB. To construct each rotated dataset, we construct five tasks comprising of the original images from the dataset rotated by $0$, $22.5$, $45$, $67.5$, and $90$ degrees. We then train on each of these tasks in sequence. 
Following the work of \citet{Mirzadeh2022} we evaluate the efficacy of a continual learning model via four metrics: \textit{Average Accuracy} (AA), \textit{Average Forgetting} (AF), \textit{Learning Accuracy} (LA), and \textit{Joint Accuracy} (JA). For more discussion on the meaning of these metrics, please see \Cref{sec:metrics}. 

\begin{figure*}[t!]
    \subfloat[1 Layer]{\includegraphics[width=0.3\textwidth]{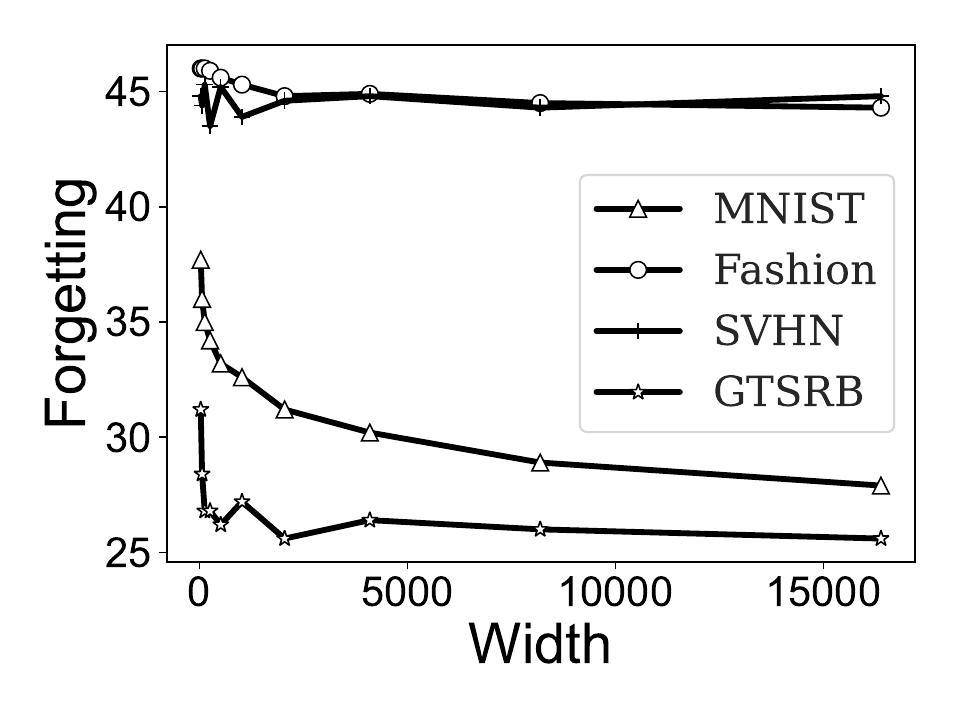}\label{fig:layer1width}}\hfill
    \subfloat[2 Layer]{\includegraphics[width=0.3\textwidth]{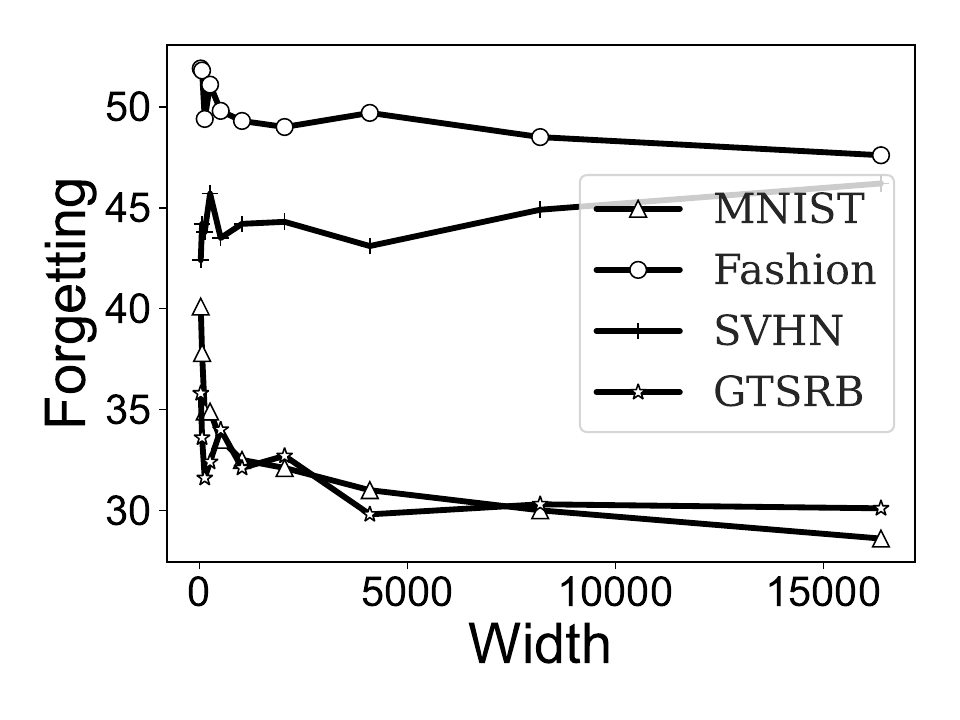}\label{fig:layer2width}}\hfill
    \subfloat[3 Layer]{\includegraphics[width=0.3\textwidth]{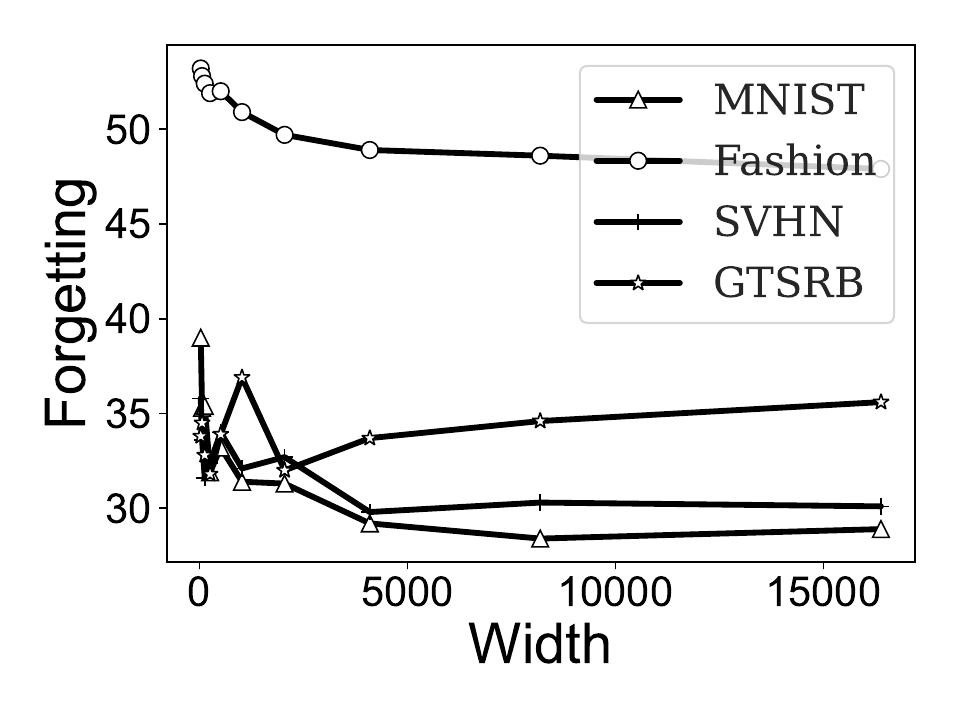}\label{fig:layer3width}}
   
    \caption{We visualize the diminishing returns of increasing width across networks of varying depth. This corroborates our theoretical analysis.  }
     \label{fig:erroroverwidth}
\end{figure*}

\subsection{Modeling Setup}

We utilize the same model architecture for all of the experiments: a Feed Forward Network consisting of an input, hidden, and output layer. Between each layer, we use ReLU activations. We vary the width of the hidden layer in all experiments while keeping all other hyperparameters fixed. We train these models using the SGD optimizer 
 as well as Adam \citep{kingma2015adam}, the latter of which we report in the appendix. We train these models as classifiers on each task using CrossEntropy Loss for $5$ epochs. This training paradigm is a standard modeling choice in the Continual Learning literature \citep{mirzadeh2020linear}.
 \footnote{These MLP models perform well short of state-of-the-art convolutional neural networks \citep{lecun1995convolutional} and vision transformer \citep{dosovitskiy2020transformers} architectures. Nevertheless, our work focuses on developing a rigorous and principled understanding of the effect of width on the training dynamics of continual learning.}. 
 To note, while swapping out the input and output layers for each task is standard, we found that doing so did not impact the results significantly since each task has the same input and output dimensionality, so we do not swap the input and output layers for each task to better align with the theoretical analysis. 
\section{Results}
\begin{figure*}[hbtp!]
    \subfloat[Forgetting vs. Depth]{
        \includegraphics[width=0.48\textwidth]{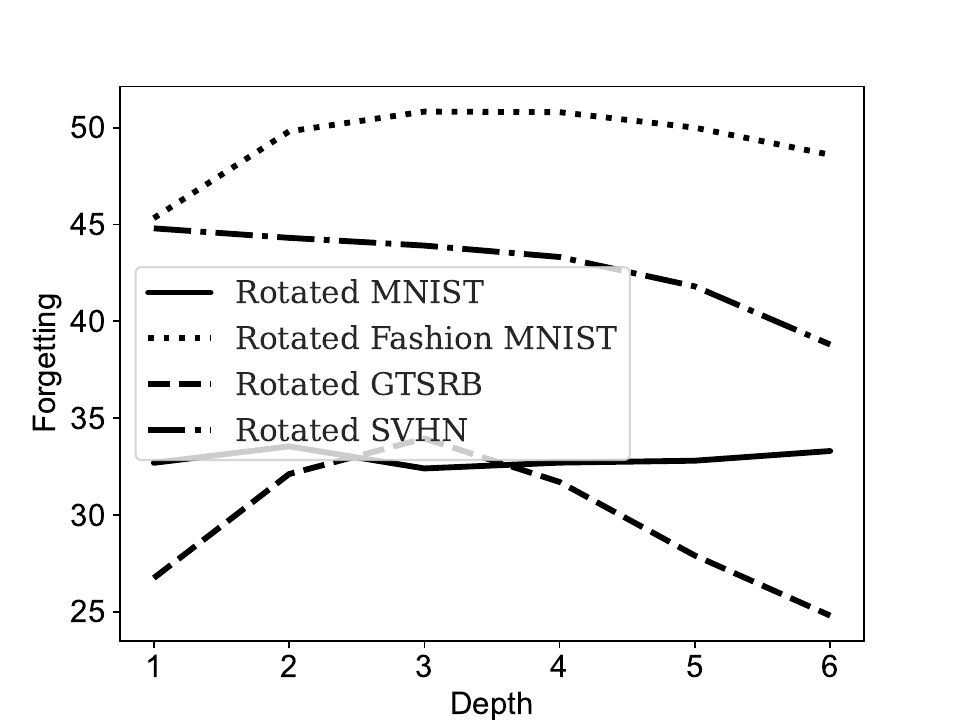}
        \label{fig:mnisterroroverdepth}
        }
    \subfloat[Forgetting vs. Task Index]{
        \includegraphics[width=0.48\textwidth]{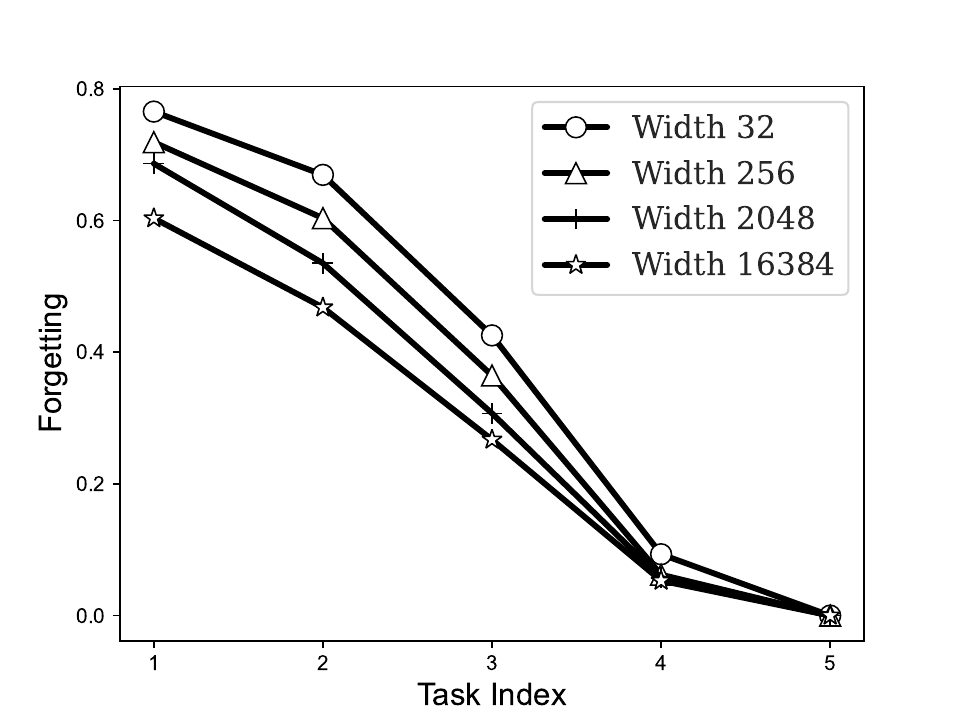}
        \label{fig:mnisterrorovertime}
    }
    \caption{We visualize the relationship between depth and task index on forgetting over several datasets. \Cref{fig:mnisterrorovertime} is only for Rotated MNIST.}
    \label{fig:erroroverdepth}
\end{figure*}
\paragraph{Distance From Initialization}\label{sec:distfrominit} In \Cref{fig:distmnist} and \Cref{fig:distcifar}, we note that a model's distance from initialization tends to decrease as a function of width, which provides empirical evidence of \Cref{ass:distance}. The numerator of our relative distance metric is the Frobenius norm of the difference between the hidden layers of the intermediate models trained on the first task and second tasks. In the denominator of our metric, we have the $\ell_2$ norm of the hidden layer of the model trained on the first task. We see a slowly decreasing distance as the width increases exponentially. To better understand the constants in \Cref{ass:distance}, we find what values of $\gamma$ and $\beta$ best explain the curves seen in \Cref{fig:distmnist} and \Cref{fig:distmnist}. We see values of $\gamma = 2.5, \beta = 0.12$ for Fashion MNIST and $\gamma=0.013, \beta=0.311$ for MNIST. This finding also validates the existing literature on the relationship between width and distance from initialization \citep{nagarajan2019generalization, mirzadeh2020linear}. We plot the predicted line denoting the predicted relationship from \Cref{ass:distance} using the best fitting parameters. 

\paragraph{Dense Model Results} Furthermore, in \Cref{tab:continual learning experiments} and \Cref{fig:erroroverwidth}, we see that increasing width helps with improving forgetting early on but receives diminishing returns eventually. Looking specifically at \Cref{fig:erroroverwidth}, we see a similar trend for datasets MNIST, Fashion MNIST, and GTSRB. Namely, we see that the increasing width decreases forgetting as width increases, but the forgetting plateaus at larger widths. This holds across one-layer, two-layer, and three-layer Feed Forward Networks. However, for SVHN, we see for the first layer that width does not affect the forgetting much and even increases the forgetting as width increases in two and three layer networks. All four of these datasets demonstrate a finding beyond the existing literature: increasing width does not always decrease forgetting. In \Cref{tab:continual learning experiments}, we provide specific numbers for one-layer networks on MNIST and Fashion MNIST. For other datasets and depths, we relegate these tables to the appendix for space purposes; see \Cref{fig:remainingmnist} and \Cref{fig:svhngstrb}. We see specifically in both cases as width approaches $2^{16}$, the Average Forgetting begins to plateau, as predicted by our theory. Despite the initial forgetting for width $2^5$ on both datasets beginning at different levels of forgetting, they both exhibit a similar shape curve in the forgetting, changing much less after a width of $2^{11}$.

\begin{figure*}[ht!]
\centering
\subfloat[.48\textwidth][Rotated SVHN]{
  \begin{sc}
  \begin{tabular}{ccccc}
    \toprule
    Width & AA & AF & LA & JA \\
    \midrule
    1x & 48.8 & 43.8 & 92.6 & 92.7 \\
    5x & 49.0 & 43.3 & 92.3 & 93.6 \\
    10x & 47.9 & 42.1 & 90.0 & 94.2 \\
    15x & 49.6 & 43.1 & 92.7 & 94.0 \\
    20x & 50.6 & 40.1 & 90.7 & 94.6 \\
    \bottomrule
  \end{tabular}
  \end{sc}

}
\subfloat[.48\textwidth][Rotated GTSRB]{
\begin{sc}
  \begin{tabular}{ccccc}
    \toprule
    Width & AA & AF & LA & JA \\
    \midrule
    1x & 43.8 & 45.6 & 89.4 & 89.0 \\
    5x & 50.7 & 42.7 & 93.4 & 91.1 \\
    10x & 52.1 & 41.5 & 93.6 & 92.4 \\
    15x & 55.0 & 39.1 & 94.1 & 93.1 \\
    20x & 53.1 & 40.9 & 94.0 & 89.1 \\
    \bottomrule
  \end{tabular}
  \end{sc}

}

\captionof{table}[foo]{We report the numbers from our continual learning experiments using a Wide ResNet model \cite{zagoruyko2016wide} on the SVHN and GTSRB datasets. The width reported in the first column corresponds to the multiplicative amount applied to the width factor parameter of the Wide ResNet model. We again see a similar trend of diminishing returns as we did in the MLP setting.}
\label{fig:resnet}
\end{figure*}
 
\paragraph{Forgetting over Time} We briefly explore the error over tasks: how much the final model forgets as the number of tasks seen increases. We take the model trained on the final task and compute its forgetting on each of the previous tasks. For example, the forgetting on task index $5$ is $0$, and the forgetting on task index $1$ is the largest since that is the oldest dataset the model has seen. For space, we include the majority of the figures in the appendix. We plot the error over the task indices in \Cref{fig:mnisterrorovertime} and \Cref{fig:errorovertime} for FFNs of differing widths on all datasets. We see a roughly linear increase in forgetting as the task index decreases, as is expected. This relationship is seemingly independent of width and dataset. This linear relationship independent of width corroborates the theoretical analysis in \Cref{lem:ourperturbation}, which predicts a linear relationship. We include further details in the appendix.

\paragraph{Connection between Forgetting and Depth} Our model roughly predicts an increase in forgetting as the depth of the neural network is increased. This matches several empirical results such as the results from \citet{mirzadeh2022architecture} that see an increase in forgetting as depth increases. We reproduce these experiments and report them in \Cref{fig:mnisterroroverdepth}. Up to a depth of $3$, We see empirically that depth and forgetting are positively correlated, roughly corroborating our theory. Specifically, on Fashion MNIST and GTSRB, increasing the number of layers from $1$ to $3$ roughly increases the forgetting. On datasets MNIST and SVHN, the impact of depth on forgetting is too small to make any meaningful connection. However, for all datasets besides MNIST, we see forgetting decreases as depth is increased after $4$ hidden layers. We attribute this to the lower overall accuracy at these higher depths due to vanishing gradients. At these high depths, vanishing gradients cause the overall accuracy to decrease. As accuracy decreases in the high-depth regime, the forgetting artificially decreases. 

\paragraph{Connection between Forgetting and Sparsity} We briefly note that our theory predicts that inducing sparsity will improve forgetting. In \Cref{tab:sparsity}, when setting the sparsity coefficient $\alpha=0.1$, forgetting decreases significantly. For example, on MNIST, at width $2^{14}$, the Average Forgetting decreases to $0.2$, compared to $27.9$ in the dense setting. This corroborates our analysis's prediction of the connection between sparsity and forgetting.

\subsection{Experiments with Wide ResNet Models}

In this section, we present the results from replacing the FFN used in our primary experimental analysis with a Wide Resnet model \cite{zagoruyko2016wide}. While our theoretical results focus solely on MLP models, we also investigate whether our findings of dimishing returns of width in continual learning also hold in the case of more sophisticated convolutional networks, which achieve higher accuracy and are much more commonly used in practice for vision tasks \cite{krizhevsky2012imagenet}. Following prior work from \cite{Mirzadeh2022}, we focus on the Wide Resnet architecture due to the model's strong empirical performance and the ability to easily scale the model's width through a predefined multiplicative factor. In our experiments, we focus on the same rotated SVHN and GTSRB tasks considered earlier and maintain all other settings other than the model architecture. Our findings are documented in \Cref{fig:resnet}. For both datasets, we see that as the width is increased greatly, the forgetting slowly decreases to $40.1$ and $40.9$ for SVHN and GTSRB respetively. From these results, we do indeed see a similar diminishing returns phenomenon in forgetting as we scale the width factor of the ResNet model. These preliminary findings also suggest that extending our theoretical results to other model architectures more commonly found in practice might be a fruitful direction for future work. 

\section{Discussion}
We examine both empirically and theoretically the connection between continual learning and the width of a Feed Forward Network. While our theoretical framework doesn't completely capture all information about catastrophic forgetting, it is a valuable first step to analyzing continual learning theoretically. Increasing width receives diminishing returns at some point. We empirically see these diminishing returns at larger hidden dimensions than tested in previous literature. A possible extension would be whether similar increases in scale receive diminishing returns such as depth, number of channels in CNNs, hidden dimension in LLMs, etc. Additionally, an important work could be to examine if width, in conjunction with other functional regularization methods, can reduce the effect of diminishing returns.

\paragraph{Limitations} Our analysis relies on an assumption about the distance from initialization seen during training, which we have yet to prove rigorously. Moreover, our analysis is restricted to FFNs and has yet to be extended to more complex architectures such as Residual Networks or Attention networks. Furthermore, our analysis does not perfectly predict the exact continual learning error.  

\section*{Impact Statement}
This paper presents work whose goal is to advance the field of Machine Learning. There are many potential societal consequences of our work, none of which we feel must be specifically highlighted here.

\bibliography{iclr2024_conference}

\begin{thebibliography}{47}
\providecommand{\natexlab}[1]{#1}
\providecommand{\url}[1]{\texttt{#1}}
\expandafter\ifx\csname urlstyle\endcsname\relax
  \providecommand{\doi}[1]{doi: #1}\else
  \providecommand{\doi}{doi: \begingroup \urlstyle{rm}\Url}\fi

\bibitem[Allen-Zhu et~al.(2019)Allen-Zhu, Li, and Liang]{allen2019learning}
Allen-Zhu, Z., Li, Y., and Liang, Y.
\newblock Learning and generalization in overparameterized neural networks,
  going beyond two layers.
\newblock \emph{Advances in neural information processing systems}, 32, 2019.

\bibitem[Arora et~al.(2018)Arora, Ge, Neyshabur, and Zhang]{pmlr-v80-arora18b}
Arora, S., Ge, R., Neyshabur, B., and Zhang, Y.
\newblock Stronger generalization bounds for deep nets via a compression
  approach.
\newblock In Dy, J. and Krause, A. (eds.), \emph{Proceedings of the 35th
  International Conference on Machine Learning}, volume~80 of \emph{Proceedings
  of Machine Learning Research}, pp.\  254--263. PMLR, 10--15 Jul 2018.
\newblock URL \url{https://proceedings.mlr.press/v80/arora18b.html}.

\bibitem[Arora et~al.(2019)Arora, Du, Hu, Li, Salakhutdinov, and
  Wang]{arora2019exact}
Arora, S., Du, S.~S., Hu, W., Li, Z., Salakhutdinov, R.~R., and Wang, R.
\newblock On exact computation with an infinitely wide neural net.
\newblock \emph{Advances in neural information processing systems}, 32, 2019.

\bibitem[Bennani \& Sugiyama(2020)Bennani and
  Sugiyama]{bennani2020generalisation}
Bennani, M.~A. and Sugiyama, M.
\newblock Generalisation guarantees for continual learning with orthogonal
  gradient descent.
\newblock In \emph{4th Lifelong Machine Learning Workshop at ICML 2020}, 2020.

\bibitem[Chaudhry et~al.(2018)Chaudhry, Ranzato, Rohrbach, and
  Elhoseiny]{chaudhry2018efficient}
Chaudhry, A., Ranzato, M., Rohrbach, M., and Elhoseiny, M.
\newblock Efficient lifelong learning with a-gem.
\newblock In \emph{International Conference on Learning Representations}, 2018.

\bibitem[Chizat et~al.(2019)Chizat, Oyallon, and Bach]{chizat2019lazy}
Chizat, L., Oyallon, E., and Bach, F.
\newblock On lazy training in differentiable programming.
\newblock \emph{Advances in neural information processing systems}, 32, 2019.

\bibitem[Dhawan et~al.(2023)Dhawan, Huang, Bae, and Grosse]{Dhawan2023}
Dhawan, N., Huang, S., Bae, J., and Grosse, R.~B.
\newblock Efficient parametric approximations of neural network function space
  distance.
\newblock In Krause, A., Brunskill, E., Cho, K., Engelhardt, B., Sabato, S.,
  and Scarlett, J. (eds.), \emph{Proceedings of the 40th International
  Conference on Machine Learning}, volume 202 of \emph{Proceedings of Machine
  Learning Research}, pp.\  7795--7812. PMLR, 23--29 Jul 2023.
\newblock URL \url{https://proceedings.mlr.press/v202/dhawan23a.html}.

\bibitem[Diethe et~al.(2019)Diethe, Borchert, Thereska, Balle, and
  Lawrence]{diethe2019continual}
Diethe, T., Borchert, T., Thereska, E., Balle, B., and Lawrence, N.
\newblock Continual learning in practice.
\newblock \emph{arXiv preprint arXiv:1903.05202}, 2019.

\bibitem[Doan et~al.(2021)Doan, Bennani, Mazoure, Rabusseau, and
  Alquier]{doan2021theoretical}
Doan, T., Bennani, M.~A., Mazoure, B., Rabusseau, G., and Alquier, P.
\newblock A theoretical analysis of catastrophic forgetting through the ntk
  overlap matrix.
\newblock In \emph{International Conference on Artificial Intelligence and
  Statistics}, pp.\  1072--1080. PMLR, 2021.

\bibitem[Dosovitskiy et~al.(2020)Dosovitskiy, Beyer, Kolesnikov, Weissenborn,
  Zhai, and Unterthiner]{dosovitskiy2020transformers}
Dosovitskiy, A., Beyer, L., Kolesnikov, A., Weissenborn, D., Zhai, X., and
  Unterthiner, T.
\newblock Transformers for image recognition at scale.
\newblock \emph{arXiv preprint arXiv:2010.11929}, 2020.

\bibitem[Du et~al.(2019)Du, Lee, Li, Wang, and Zhai]{pmlr-v97-du19c}
Du, S., Lee, J., Li, H., Wang, L., and Zhai, X.
\newblock Gradient descent finds global minima of deep neural networks.
\newblock In Chaudhuri, K. and Salakhutdinov, R. (eds.), \emph{Proceedings of
  the 36th International Conference on Machine Learning}, volume~97 of
  \emph{Proceedings of Machine Learning Research}, pp.\  1675--1685. PMLR,
  09--15 Jun 2019.
\newblock URL \url{https://proceedings.mlr.press/v97/du19c.html}.

\bibitem[Farajtabar et~al.(2020)Farajtabar, Azizan, Mott, and
  Li]{farajtabar2020orthogonal}
Farajtabar, M., Azizan, N., Mott, A., and Li, A.
\newblock Orthogonal gradient descent for continual learning.
\newblock In \emph{International Conference on Artificial Intelligence and
  Statistics}, pp.\  3762--3773. PMLR, 2020.

\bibitem[Geiger et~al.(2020)Geiger, Spigler, Jacot, and
  Wyart]{geiger2020disentangling}
Geiger, M., Spigler, S., Jacot, A., and Wyart, M.
\newblock Disentangling feature and lazy training in deep neural networks.
\newblock \emph{Journal of Statistical Mechanics: Theory and Experiment},
  2020\penalty0 (11):\penalty0 113301, 2020.

\bibitem[Ghorbani et~al.(2019)Ghorbani, Mei, Misiakiewicz, and
  Montanari]{ghorbani2019limitations}
Ghorbani, B., Mei, S., Misiakiewicz, T., and Montanari, A.
\newblock Limitations of lazy training of two-layers neural network.
\newblock \emph{Advances in Neural Information Processing Systems}, 32, 2019.

\bibitem[Hadsell et~al.(2020)Hadsell, Rao, Rusu, and
  Pascanu]{hadsell2020embracing}
Hadsell, R., Rao, D., Rusu, A.~A., and Pascanu, R.
\newblock Embracing change: Continual learning in deep neural networks.
\newblock \emph{Trends in cognitive sciences}, 24\penalty0 (12):\penalty0
  1028--1040, 2020.

\bibitem[Jacot et~al.(2018)Jacot, Gabriel, and Hongler]{jacot2018neural}
Jacot, A., Gabriel, F., and Hongler, C.
\newblock Neural tangent kernel: Convergence and generalization in neural
  networks.
\newblock \emph{Advances in neural information processing systems}, 31, 2018.

\bibitem[Khan \& Swaroop(2021)Khan and Swaroop]{khan2021knowledge}
Khan, M. E.~E. and Swaroop, S.
\newblock Knowledge-adaptation priors.
\newblock \emph{Advances in Neural Information Processing Systems},
  34:\penalty0 19757--19770, 2021.

\bibitem[Kim et~al.(2022)Kim, Xiao, Konishi, Ke, and Liu]{kim2022theoretical}
Kim, G., Xiao, C., Konishi, T., Ke, Z., and Liu, B.
\newblock A theoretical study on solving continual learning.
\newblock \emph{Advances in Neural Information Processing Systems},
  35:\penalty0 5065--5079, 2022.

\bibitem[Kingma \& Ba(2015)Kingma and Ba]{kingma2015adam}
Kingma, D.~P. and Ba, J.
\newblock Adam: A method for stochastic optimization.
\newblock In Bengio, Y. and LeCun, Y. (eds.), \emph{ICLR (Poster)}, 2015.
\newblock URL
  \url{http://dblp.uni-trier.de/db/conf/iclr/iclr2015.html#KingmaB14}.

\bibitem[Knoblauch et~al.(2020)Knoblauch, Husain, and
  Diethe]{knoblauch2020optimal}
Knoblauch, J., Husain, H., and Diethe, T.
\newblock Optimal continual learning has perfect memory and is np-hard.
\newblock In \emph{International Conference on Machine Learning}, pp.\
  5327--5337. PMLR, 2020.

\bibitem[Krizhevsky et~al.(2012)Krizhevsky, Sutskever, and
  Hinton]{krizhevsky2012imagenet}
Krizhevsky, A., Sutskever, I., and Hinton, G.~E.
\newblock Imagenet classification with deep convolutional neural networks.
\newblock \emph{Advances in neural information processing systems}, 25, 2012.

\bibitem[Kudithipudi et~al.(2022)Kudithipudi, Aguilar-Simon, Babb, Bazhenov,
  Blackiston, Bongard, Brna, Chakravarthi~Raja, Cheney, Clune,
  et~al.]{kudithipudi2022biological}
Kudithipudi, D., Aguilar-Simon, M., Babb, J., Bazhenov, M., Blackiston, D.,
  Bongard, J., Brna, A.~P., Chakravarthi~Raja, S., Cheney, N., Clune, J.,
  et~al.
\newblock Biological underpinnings for lifelong learning machines.
\newblock \emph{Nature Machine Intelligence}, 4\penalty0 (3):\penalty0
  196--210, 2022.

\bibitem[LeCun et~al.(1995)LeCun, Bengio, et~al.]{lecun1995convolutional}
LeCun, Y., Bengio, Y., et~al.
\newblock Convolutional networks for images, speech, and time series.
\newblock \emph{The handbook of brain theory and neural networks},
  3361\penalty0 (10):\penalty0 1995, 1995.

\bibitem[LeCun et~al.(2015)LeCun, Bengio, and Hinton]{lecun2015deep}
LeCun, Y., Bengio, Y., and Hinton, G.
\newblock Deep learning.
\newblock \emph{nature}, 521\penalty0 (7553):\penalty0 436--444, 2015.

\bibitem[Lee et~al.(2019)Lee, Xiao, Schoenholz, Bahri, Novak, Sohl-Dickstein,
  and Pennington]{lee2019wide}
Lee, J., Xiao, L., Schoenholz, S., Bahri, Y., Novak, R., Sohl-Dickstein, J.,
  and Pennington, J.
\newblock Wide neural networks of any depth evolve as linear models under
  gradient descent.
\newblock \emph{Advances in neural information processing systems}, 32, 2019.

\bibitem[Li \& Liang(2018)Li and Liang]{li2018learning}
Li, Y. and Liang, Y.
\newblock Learning overparameterized neural networks via stochastic gradient
  descent on structured data.
\newblock \emph{Advances in neural information processing systems}, 31, 2018.

\bibitem[Lopez-Paz \& Ranzato(2017)Lopez-Paz and Ranzato]{lopez2017gradient}
Lopez-Paz, D. and Ranzato, M.
\newblock Gradient episodic memory for continual learning.
\newblock \emph{Advances in neural information processing systems}, 30, 2017.

\bibitem[Lu et~al.(2017)Lu, Pu, Wang, Hu, and Wang]{lu2017expressive}
Lu, Z., Pu, H., Wang, F., Hu, Z., and Wang, L.
\newblock The expressive power of neural networks: A view from the width.
\newblock \emph{Advances in neural information processing systems}, 30, 2017.

\bibitem[McCloskey \& Cohen(1989)McCloskey and
  Cohen]{mccloskey1989catastrophic}
McCloskey, M. and Cohen, N.~J.
\newblock Catastrophic interference in connectionist networks: The sequential
  learning problem.
\newblock In \emph{Psychology of learning and motivation}, volume~24, pp.\
  109--165. Elsevier, 1989.

\bibitem[Meisburger et~al.(2023)Meisburger, Lakshman, Geordie, Engels, Ramos,
  Pranav, Coleman, Meisburger, Gupta, Adunukota, et~al.]{meisburger2023bolt}
Meisburger, N., Lakshman, V., Geordie, B., Engels, J., Ramos, D.~T., Pranav,
  P., Coleman, B., Meisburger, B., Gupta, S., Adunukota, Y., et~al.
\newblock Bolt: An automated deep learning framework for training and deploying
  large-scale neural networks on commodity cpu hardware.
\newblock \emph{arXiv preprint arXiv:2303.17727}, 2023.

\bibitem[Mirzadeh et~al.(2020)Mirzadeh, Farajtabar, Gorur, Pascanu, and
  Ghasemzadeh]{mirzadeh2020linear}
Mirzadeh, S.~I., Farajtabar, M., Gorur, D., Pascanu, R., and Ghasemzadeh, H.
\newblock Linear mode connectivity in multitask and continual learning.
\newblock \emph{arXiv preprint arXiv:2010.04495}, 2020.

\bibitem[Mirzadeh et~al.(2022{\natexlab{a}})Mirzadeh, Chaudhry, Yin, Hu,
  Pascanu, Gorur, and Farajtabar]{Mirzadeh2022}
Mirzadeh, S.~I., Chaudhry, A., Yin, D., Hu, H., Pascanu, R., Gorur, D., and
  Farajtabar, M.
\newblock Wide neural networks forget less catastrophically.
\newblock In \emph{International Conference on Machine Learning}, pp.\
  15699--15717. PMLR, 2022{\natexlab{a}}.

\bibitem[Mirzadeh et~al.(2022{\natexlab{b}})Mirzadeh, Chaudhry, Yin, Nguyen,
  Pascanu, Gorur, and Farajtabar]{mirzadeh2022architecture}
Mirzadeh, S.~I., Chaudhry, A., Yin, D., Nguyen, T., Pascanu, R., Gorur, D., and
  Farajtabar, M.
\newblock Architecture matters in continual learning.
\newblock \emph{arXiv preprint arXiv:2202.00275}, 2022{\natexlab{b}}.

\bibitem[Nagarajan \& Kolter(2019)Nagarajan and
  Kolter]{nagarajan2019generalization}
Nagarajan, V. and Kolter, J.~Z.
\newblock Generalization in deep networks: The role of distance from
  initialization.
\newblock \emph{arXiv preprint arXiv:1901.01672}, 2019.

\bibitem[Neyshabur et~al.(2018{\natexlab{a}})Neyshabur, Bhojanapalli, and
  Srebro]{Neyshabur2017}
Neyshabur, B., Bhojanapalli, S., and Srebro, N.
\newblock A pac-bayesian approach to spectrally-normalized margin bounds for
  neural networks.
\newblock In \emph{International Conference on Learning Representations},
  2018{\natexlab{a}}.

\bibitem[Neyshabur et~al.(2018{\natexlab{b}})Neyshabur, Li, Bhojanapalli,
  LeCun, and Srebro]{neyshabur2018towards}
Neyshabur, B., Li, Z., Bhojanapalli, S., LeCun, Y., and Srebro, N.
\newblock The role of over-parametrization in generalization of neural
  networks.
\newblock In \emph{International Conference on Learning Representations},
  2018{\natexlab{b}}.

\bibitem[Nguyen et~al.(2020)Nguyen, Raghu, and Kornblith]{nguyen2020wide}
Nguyen, T., Raghu, M., and Kornblith, S.
\newblock Do wide and deep networks learn the same things? uncovering how
  neural network representations vary with width and depth.
\newblock In \emph{International Conference on Learning Representations}, 2020.

\bibitem[Novak et~al.(2022)Novak, Sohl-Dickstein, and
  Schoenholz]{novak2022fast}
Novak, R., Sohl-Dickstein, J., and Schoenholz, S.~S.
\newblock Fast finite width neural tangent kernel.
\newblock In \emph{International Conference on Machine Learning}, pp.\
  17018--17044. PMLR, 2022.

\bibitem[Parisi et~al.(2019)Parisi, Kemker, Part, Kanan, and
  Wermter]{parisi2019continual}
Parisi, G.~I., Kemker, R., Part, J.~L., Kanan, C., and Wermter, S.
\newblock Continual lifelong learning with neural networks: A review.
\newblock \emph{Neural networks}, 113:\penalty0 54--71, 2019.

\bibitem[Peng et~al.(2023)Peng, Giampouras, and Vidal]{peng2023ideal}
Peng, L., Giampouras, P., and Vidal, R.
\newblock The ideal continual learner: An agent that never forgets.
\newblock In \emph{International Conference on Machine Learning}, pp.\
  27585--27610. PMLR, 2023.

\bibitem[Ramasesh et~al.(2021)Ramasesh, Lewkowycz, and
  Dyer]{ramasesh2021effect}
Ramasesh, V.~V., Lewkowycz, A., and Dyer, E.
\newblock Effect of scale on catastrophic forgetting in neural networks.
\newblock In \emph{International Conference on Learning Representations}, 2021.

\bibitem[Ring(1997)]{ring1997child}
Ring, M.~B.
\newblock Child: A first step towards continual learning.
\newblock \emph{Machine Learning}, 28:\penalty0 77--104, 1997.

\bibitem[Serra et~al.(2018)Serra, Suris, Miron, and
  Karatzoglou]{serra2018overcoming}
Serra, J., Suris, D., Miron, M., and Karatzoglou, A.
\newblock Overcoming catastrophic forgetting with hard attention to the task.
\newblock In \emph{International conference on machine learning}, pp.\
  4548--4557. PMLR, 2018.

\bibitem[Thrun \& Mitchell(1995)Thrun and Mitchell]{thrun1995lifelong}
Thrun, S. and Mitchell, T.~M.
\newblock Lifelong robot learning.
\newblock \emph{Robotics and autonomous systems}, 15\penalty0 (1-2):\penalty0
  25--46, 1995.

\bibitem[Yoon et~al.(2018)Yoon, Yang, Lee, and Hwang]{yoon2018lifelong}
Yoon, J., Yang, E., Lee, J., and Hwang, S.~J.
\newblock Lifelong learning with dynamically expandable networks.
\newblock In \emph{International Conference on Learning Representations}, 2018.

\bibitem[Zagoruyko \& Komodakis(2016)Zagoruyko and
  Komodakis]{zagoruyko2016wide}
Zagoruyko, S. and Komodakis, N.
\newblock Wide residual networks.
\newblock \emph{arXiv preprint arXiv:1605.07146}, 2016.

\bibitem[Zou et~al.(2020)Zou, Cao, Zhou, and Gu]{zou2020gradient}
Zou, D., Cao, Y., Zhou, D., and Gu, Q.
\newblock Gradient descent optimizes over-parameterized deep relu networks.
\newblock \emph{Machine learning}, 109:\penalty0 467--492, 2020.

\end{thebibliography}
\bibliographystyle{icml2024}

\newpage
\appendix
\onecolumn


\begin{table}
\centering
\begin{sc}

  \begin{tabular}{ccc}
    \toprule
    Notation & Meaning & Location \\
    \midrule
    $\mathbf{M}_t$ & Model for $t$th task & \Cref{sec:notation}\\
    $L_i$ & Lipschitz-Constants if $i$th activation function & \Cref{sec:notation} \\
    $W$ & Width of model $\mathbf{M}$ & \Cref{sec:notation}\\
    $\mathbf{A}_{t, l}$ & $l$th layer of the $t$th model & \Cref{sec:notation} \\
    $\mathcal{D}_t$ & $t$th training dataset & \Cref{sec:problem_setup} \\
    $\epsilon_{t, t^{\prime}}$ & Continual learning error of $t^{\prime}$th model on $\mathcal{D}_t$ & \Cref{sec:problem_setup} \\
    $\lambda_{i,j}^l$ & $\frac{\|\mathbf{A}_{l, j}[\mathcal{A}_{l, i}]\|_2}{\|\mathbf{A}_{l, i}[\mathcal{A}_{l, i}]\|_2}$ & \Cref{lem:ourperturbation}\\
    $\bar{\lambda}$ & Maximum ratio of spectra norms of all models and layers & \Cref{lem:ourperturbation} \\
    $L$ & Depth of model &\Cref{sec:notation} \\
    $\gamma, \beta$ & Constants governing the distance from initialization & \Cref{ass:distance} \\
    $\chi$ & $\underset{x \in \mathcal{D}_t}{\max} \|x\|_2$ & \Cref{lem:singleperturb} \\
    $\mu_{t, l}$ & Layer Cushion for layer $l$ of model $t$ & \Cref{def:layer cushion} \\
    $c_t$ & Activation Contraction of layer $l$ & \Cref{def:activation contraction} \\
    $\Gamma_t$ & $\underset{x \in \mathcal{D}_t}{\max}  \|\mathbf{M}_t(x)\|_2$ & \Cref{thm:smoothenedproof}\\
    \bottomrule
  \end{tabular}
  \captionof{table}{Table of Notation used in Paper}
  \end{sc}
\end{table}

\section{Metrics}
\label{sec:metrics}
Average accuracy is defined as the mean test accuracy of the final model over all the tasks after training on all tasks in sequence. Average forgetting is calculated as the mean difference over tasks between the accuracies obtained by the intermediate model trained on that task and the final model. The learning accuracy of a given task measures the accuracy that the model achieves immediately after training on that task. We also report learning accuracy as an average over all tasks. Finally, we report Joint Accuracy, which is the accuracy of training a model on all of the combined datasets. 

\section{Additional Continual Learning Experiments}
\begin{figure*}[t!]
    \centering
    
    \subfloat[GTSRB]{\includegraphics[width=0.48\textwidth]{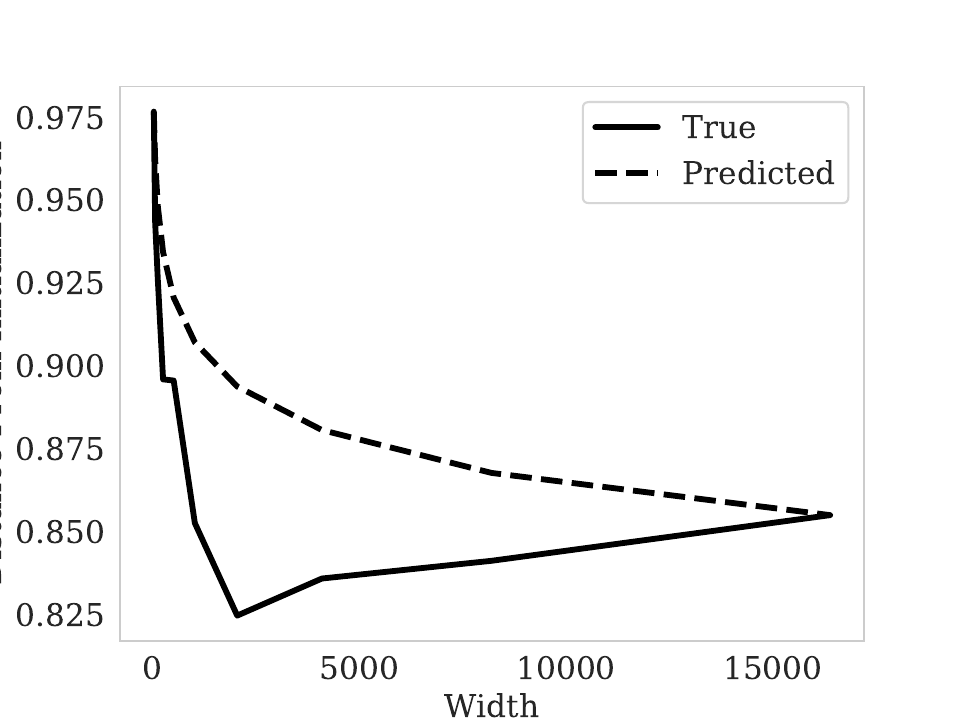}}\hfill
    \subfloat[SVHN]{\includegraphics[width=0.48\textwidth]{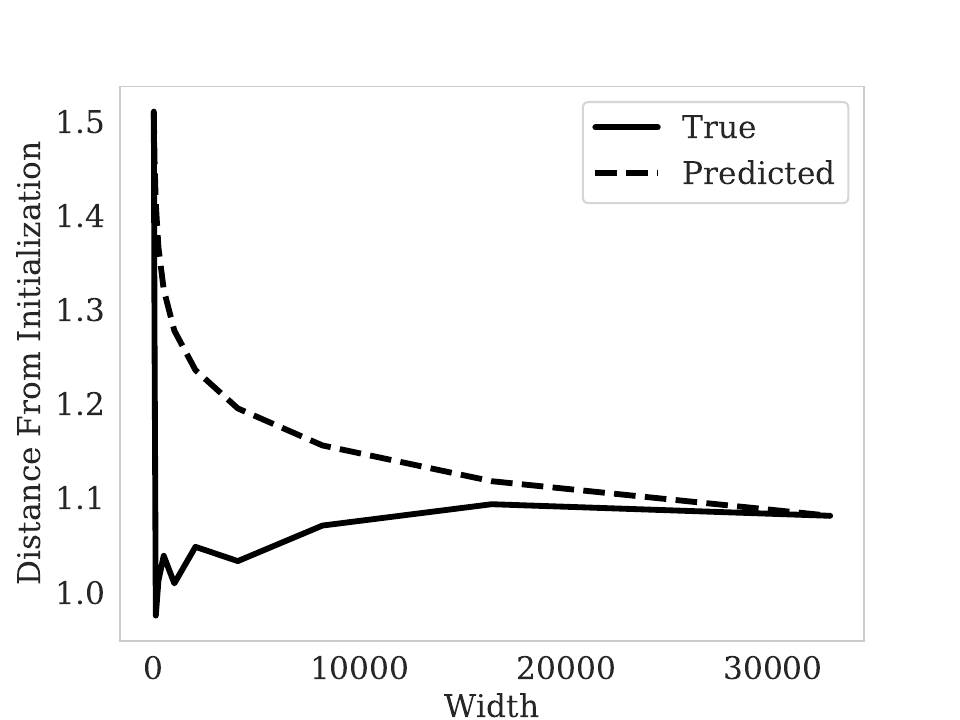}}
    
    \caption{We plot the distance from initialization for both GTSRB and SVHN experiments. We see that distance from initialization decreases slowly as the width is increased for both datasets.}
\end{figure*}

Here, we report several additional experiments to investigate the relationship between width and the ability to learn continually. We demonstrate the relationship between SGD and this diminishing returns phenomenon by repeating the experiments with Adam in \Cref{sec:addsgd}. Moreover, we test how inducing row-wise sparsity affects the relationship between Continual Learning ability. Increasing the sparsity can significantly decrease the average forgetting, as predicted by our theoretical analysis. We demonstrate this in \Cref{sec:addsparsity}.

Furthermore, we study the relationship between forgetting and the number of tasks trained. We report on this in \Cref{sec:addnumtasks}. 

\subsection{Adam and Diminishing Returns}
\label{sec:addsgd}
To further investigate the connection between width and continual learning, we wish to see whether the phenomenon of diminishing returns is visible in other optimization algorithms. In our theoretical analysis, we argue that the implicit regularization of gradient-based optimizers finds minima closer to previous tasks, improving continual learning ability. In our experiments in the main section, we conduct all experiments with SGD and empirically observe these diminishing returns. We wish to examine if this implicit regularization and diminishing returns are unique to SGD or generalizes to other optimization algorithms. To do so, we repeat the continual learning experiments in the main text with Adam as the optimizer. We report our results in \Cref{tab:continual learning experiments sgd}. Indeed, we see very similar results to SGD as in \Cref{tab:continual learning experiments sgd}. We again observe the diminishing returns for both datasets with Adam as the optimizer. This observation suggests that our analysis and this phenomenon are not unique to SGD and extend to different optimizers. 
\begin{figure*}[h!]
\centering
\subfloat[.48\textwidth][Rotated MNIST]{
  \begin{sc}
  \begin{tabular}{ccccc}
    \toprule
    Width & AA & AF & LA & JA \\
    \midrule
    32 & 51.3 & 43.9 & 94.6 & 91.1 \\
    64 & 51.5 & 43.9 & 94.7 & 92.5  \\
    128 & 52.2 & 43.2 & 94.8 & 94.1\\
    256 & 51.4 & 44.0 & 94.6 & 94.0\\
    512 & 56.3 & 39.3 & 94.9 & 93.7 \\
    1024 & 52.9 & 42.5 & 94.9 & 93.8\\
    2048 & 52.4 & 43.4 & 95.1 & 93.7\\
    4096 & 54.1 & 41.5 & 94.9 & 94.3\\
    8192 & 53.6 & 42.0 & 95.1 & 94.1 \\
    16384 & 51.0 & 44.7 & 95.0 & 93.7\\
    32768 & 55.2 & 40.7 & 95.2 & 94.0\\
    65536 & 54.9 & 41.0 & 95.4 & 93.1\\
    \bottomrule
  \end{tabular}
  \end{sc}
  \centering
}

\centering
\subfloat[.48\textwidth][Rotated Fashion MNIST]{
  \begin{sc}
  \begin{tabular}{ccccc}
    \toprule
    Width & AA & AF & LA & JA \\
    \midrule
    32 & 33.95 & 52.01 & 84.57 & 77.8 \\
    64 & 33.66 & 53.04 & 85.48 & 79.7 \\
    128 & 35.64 & 51.73 & 86.18 & 79.4 \\ 
    256 & 33.3 & 54.19 & 86.32 & 79.8 \\
    512 & 34.27 & 53.39 & 86.52 & 79.9 \\
    1024 & 36.31 & 51.33 & 86.62 & 79.1 \\
    2048 & 35.73 & 51.77 & 86.53 & 80.9 \\
    4096 & 37.47 & 50.2 & 86.66 & 80.2 \\
    8192 & 33.25 & 54.32 & 86.37 & 78.8 \\
    16384 & 33.98 & 53.71 & 86.57 & 79.9 \\
    32768 & 33.95 & 53.65 & 86.49 & 80.9 \\
    65536 & 36.91 & 50.69 & 86.53 & 79.0\\
    \bottomrule
  \end{tabular}
  \end{sc}
  \centering
}

\captionof{table}{Continual Learning Results with Adam optimizer training. We see that despite changing the optimizer to Adam, we see very similar results, including the diminishing returns trend. This suggests that the diminishing returns phenomenon is not optimizer-dependent.}
\label{tab:continual learning experiments sgd}
\end{figure*}

\subsection{Sparsity and Width}
\label{sec:addsparsity}

In Table \ref{tab:sparsity}, we report the results of our investigation into the power of row-wise sparsity to mitigate the effects of catastrophic forgetting, as predicted by our theory. For these experiments, we utilize the BOLT deep learning library (\citet{meisburger2023bolt}) which allows for explicitly setting row-wise activatins. To summarize our earlier discussion, we define row-wise sparsity as activated only a subset of the hidden layer neurons in our feedforward model architecture. We select a fraction $\alpha$ of the hidden layer neurons for each task to be active uniformly at random. Note that some neurons may be shared across multiple tasks, which leads to some amount of forgetting, but at a significantly reduced rate to what we observed in the dense case. Moreover, the overall learning accuracy is not decreased despite fewer neurons being used during training. In these experiments, we also apply sparsity to the output layer such that each task has a distinct set of output classes. As predicted by our theory, we observe that sparsity can counterweight the diminishing returns on reducing forgetting purely through width alone. However, some effects of diminishing returns can still be seen in both datasets. 

\begin{figure*}[h!]
\centering
\subfloat[.48\textwidth][Rotated MNIST]{

  \begin{sc}
  \begin{tabular}{ccccc}
    \toprule
    Width & AA & AF & LA & JA \\
    \midrule
    32 & 68.5 & 5.4 & 73.9 & 93.8\\
    64 & 75.6 & 13.3 & 88.9 & 95.8\\
    128 & 82.0 & 9.8 & 91.8 & 96.7 \\
    256  & 86.2 & 6.8 & 83.1 & 97.3 \\
    512 & 91.2 & 3.3 & 94.5 & 97.7 \\
    1024 & 94.6 & 1.3 & 95.9 & 97.9 \\
    2048 & 95.5 & 1.3 & 96.7 & 98.1 \\
    4096 & 96.4 & 1.0 & 97.3 & 98.0 \\
    8192 & 97.1 & 0.5 & 97.6 & 98.1 \\
    16384 & 97.7 & 0.2 & 97.9 & 98.0 \\
    32768 & 97.6 & 0.3 & 98.0 & 98.2 \\

    \bottomrule
  \end{tabular}
  \end{sc}
  }
\subfloat[.48\textwidth][Rotated Fashion MNIST]{
  \begin{sc}
  \begin{tabular}{ccccc}
    \toprule
    Width & AA & AF & LA & JA \\
    \midrule
    32 & 55.7 & 6.0 & 61.7 & 77.8 \\
    64 & 58.7 & 13.9 & 72.6 & 79.7 \\
    128 & 74.6 & 6.0 & 80.6 & 79.4 \\
    256 & 69.9 & 12.7 & 82.6 & 79.8 \\
    512 & 76.5 & 7.5 & 84.1 & 79.9 \\
    1024 & 79.1 & 5.7 & 84.8 & 79.1 \\
    2048 & 77.7 & 7.6 & 85.4 & 80.9 \\
    4096 & 82.3 & 3.7 & 85.9 & 80.2 \\
    8192 & 79.6 & 6.9 & 86.5 & 78.8 \\
    16384 & 78.4 & 8.5 & 86.8 & 79.9 \\
    32678 & 78.4 & 8.5 & 86.8 & 80.9\\ 
    \bottomrule
  \end{tabular}
  \end{sc}
  \centering
  }
\captionof{table}{Continual Learning experiments with Row-Wise Sparsity with $\alpha=0.1$. We see that increasing row-wise sparsity can significantly decrease forgetting while not decreasing overall learning accuracy. This corroborates our theoretical results. We still do see diminishing returns in terms of increasing width and continual learning.}
\label{tab:sparsity}
\end{figure*}

\subsection{Accuracy over number of tasks}
\begin{figure*}[t!]
    \subfloat[Fashion MNIST]{\includegraphics[width=0.48\textwidth]{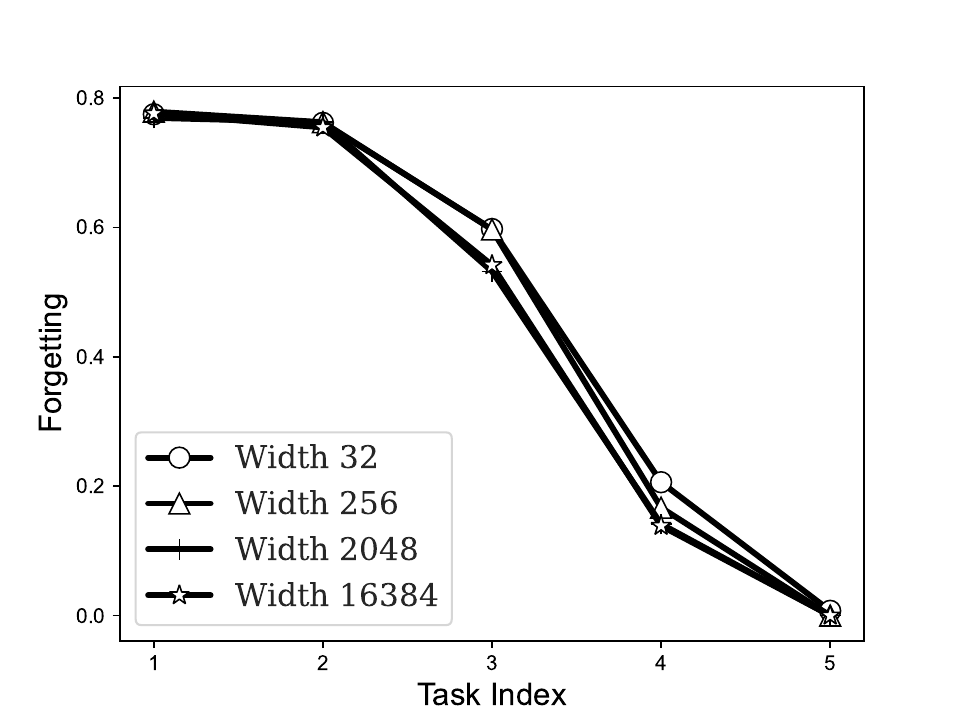}\label{fig:16384}}\hfill
    \subfloat[GTSRB]{\includegraphics[width=0.48\textwidth]{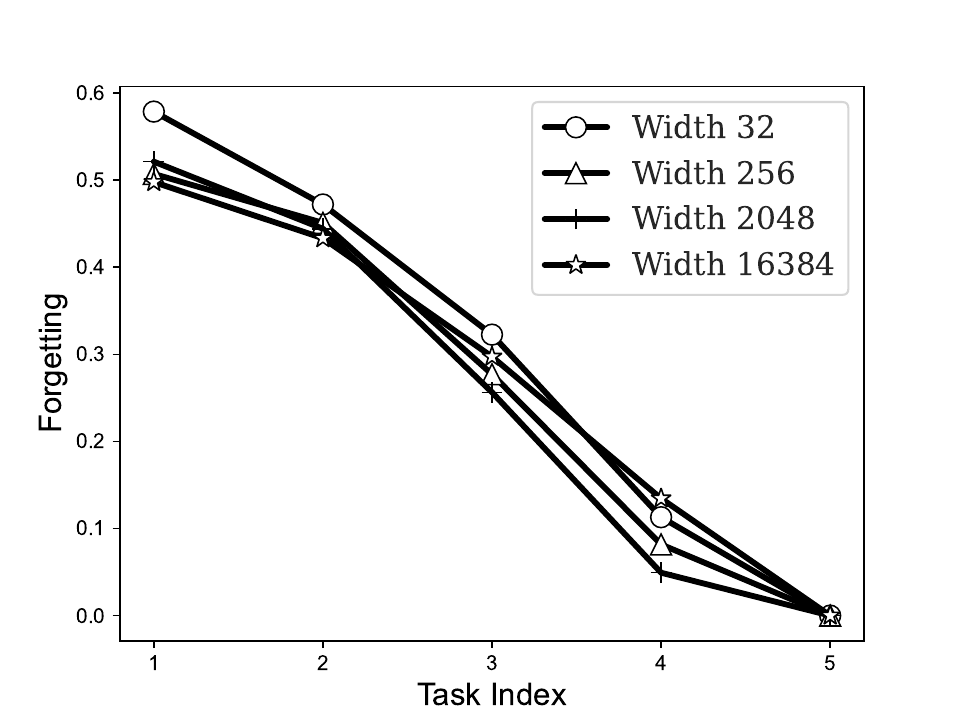}\label{fig:65535}}
   
     \label{fig:errorovertime}
\end{figure*}
\label{sec:addnumtasks}
We measure the accuracies of the final learned model over all tasks and plot how the accuracy decreases. We do this over GTSRB and Fashion MNIST. We conducted this experiment over several different width networks. Our analysis predicts that there should be a roughly linear relationship between the accuracy and the number of tasks. With more intermediate tasks between an initial task and the final task, the model will decrease roughly linearly in accuracy.

Moreover, our analysis predicts that this error should be independent of the width. We report the results of our experiments in \Cref{fig:errorovertime}. We see in both experiments that the accuracy roughly decreases as the number of tasks increases. Moreover, this relationship holds across all the widths tested. This observation corroborates our theory. 

\subsection{Continual Learning Experiments on Additional Datasets and Depth}
We report the numbers from all remaining experiments. In \Cref{fig:remainingmnist}, we report all additional experiments on Fashion MNIST and MNIST. We also report numbers for two and three layer networks. We also report the numbers for one, two, and three layer networks on datasets SVHN and GTRSB in \Cref{fig:svhngstrb}. We see that the trend of diminishing returns in terms of width across many of the datasets. We also see that this trend remains apparent among different layer networks. Moreover, we see that as depth increasing, the forgetting numbers tend to increase across many of the datasets. 

\begin{figure*}[ht!]
\centering
\subfloat[.48\textwidth][Rotated MNIST (2 Layers)]{
  \begin{sc}
  \begin{tabular}{ccccc}
    \toprule
    Width & AA & AF & LA & JA \\
    \midrule
    32 & 54.8 & 40.1 & 93.7 & 93.1 \\
    64 & 57.5 & 37.8 & 94.3 & 94.7\\ 
    128 & 61.0 & 34.9 & 94.9 & 95.6 \\
    256 & 61.3 & 34.9 & 95.1 & 96.1 \\
    512 & 62.9 & 33.5 & 95.4 & 96.5\\
    1024 & 64.0 & 32.5 & 95.5 & 96.7 \\
    2048 & 64.5 & 32.1 & 95.6 & 96.6 \\
    4096 & 65.7 & 31.0 & 95.8 & 96.8 \\
    8192 & 66.9 & 30.0 & 95.8 & 97.0 \\
    16384 & 68.4 & 28.6 & 96.2 & 97.1 \\ 
    32768 & 67.2 & 30.2 & 96.5 & 97.1 \\
    \bottomrule
  \end{tabular}
  \end{sc}
}
\subfloat[.48\textwidth][Rotated Fashion MNIST (2 Layers)]{
\begin{sc}
  \begin{tabular}{ccccc}
    \toprule
    Width & AA & AF & LA & JA \\
    \midrule
    32 & 32.1 & 51.9 & 82.4 & 81.1 \\
    64 & 32.8 & 51.8 & 82.9 & 82.2 \\
    128 & 35.2 & 49.4 & 83.1 & 83.4 \\
    256 & 33.9 & 51.1 & 83.5 & 84.5 \\ 
    512 & 35.6 & 49.8 & 83.8 & 84.5 \\
    1024 & 36.2 & 49.3 & 83.6 & 84.6 \\
    2048 & 36.5 & 49.0 & 84.3 & 85.0 \\
    4096 & 36.2 & 49.7 & 84.6 & 85.1 \\
    8192 & 38.0 & 48.5 & 85.1 & 86.0 \\
    16384 & 38.9 & 47.6 & 85.3 & 86.1 \\
    32768 & 39.2 & 47.8 & 85.8 & 85.9 \\ 
    \bottomrule
  \end{tabular}
  \end{sc}
}

\subfloat[.48\textwidth][Rotated MNIST (3 Layers)]{
  \begin{sc}
  \begin{tabular}{ccccc}
    \toprule
    Width & AA & AF & LA & JA \\
    \midrule
    32 & 56.5 & 39.0 & 91.8 & 93.1 \\
    64 & 60.6 & 35.3 & 92.6 & 95.2 \\
    128 & 61.2 & 35.4 & 93.8 & 95.9 \\
    256 & 64.9 & 31.9 & 94.4 & 96.3 \\
    512 & 63.7 & 33.2 & 95.5 & 96.5 \\
    1024 & 65.8 & 31.4 & 95.9 & 97.1 \\ 
    2048 & 66.0 & 31.3 & 96.2 & 97.2 \\
    4096 & 68.2 & 29.2 & 96.5 & 97.4 \\
    8192 & 69.2 & 28.4 & 96.7 & 97.3 \\
    16384 & 68.8 & 28.9 & 96.9 & 97.4 \\
    \bottomrule
  \end{tabular}
  \end{sc}
}
\subfloat[.48\textwidth][Rotated Fashion MNIST (3 Layers)]{
\begin{sc}
  \begin{tabular}{ccccc}
    \toprule
    Width & AA & AF & LA & JA \\
    \midrule
    32 & 31.0 & 53.2 & 82.1 & 81.3 \\
    64 & 31.7 & 52.8 & 82.5 & 82.9 \\
    128 & 32.7 & 52.4 & 83.1 & 83.8 \\
    256 & 33.6 & 51.9 & 83.5 & 84.5 \\
    512 & 33.8 & 52.0 & 83.8 & 84.3 \\
    1024 & 35.2 & 50.9 & 84.5 & 85.3 \\
    2048 & 36.6 & 49.7 & 84.6 & 85.5\\
    4096 & 37.6 & 48.9 & 85.1 & 85.9 \\
    8192 & 38.3 & 48.6 & 85.6 & 85.8 \\
    16384 & 39.4 & 47.9 & 85.9 & 85.9\\
    \bottomrule
  \end{tabular}
  \end{sc}
}
\caption{We report the numbers from our experiments on more layers on Fashion MNIST and MNIST. We see that the trend of diminishing returns holds. We also see that increasing depth often leads to an increase in forgetting.}
\label{fig:remainingmnist}
\end{figure*}

\begin{figure*}[ht!]
\centering
\subfloat[.48\textwidth][Rotated SVHN]{
  \begin{sc}
  \begin{tabular}{ccccc}
    \toprule
    Width & AA & AF & LA & JA \\
    \midrule
    32 & 28.8 & 44.8 & 69.8 & 62.2 \\
    64 & 31.5 & 44.4 & 72.6 & 66.5 \\
    128 & 32.7 & 45.3 & 73.7 & 71.1 \\
    256 & 34.8 & 43.5 & 74.9 & 73.0 \\
    512 & 33.6 & 45.2 & 73.9 & 72.0 \\
    1024 & 34.7 & 43.9 & 74.2 & 73.0 \\
    2048 & 34.4 & 44.6 & 74.7 & 72.7 \\
    4096 & 34.7 & 44.8 & 75.0 & 73.5 \\
    8192 & 34.7 & 44.3 & 74.2 & 73.2 \\
    16384 & 33.5 & 44.8 & 73.5 & 70.1 \\
    32768 & 30.3 & 48.4 & 73.8 & 72.9 \\
    65536 & 35.6 & 43.7 & 73.4 & 73.6 \\
    \bottomrule
  \end{tabular}
  \end{sc}
}
\subfloat[.48\textwidth][Rotated GTSRB]{
\begin{sc}
  \begin{tabular}{ccccc}
    \toprule
    Width & AA & AF & LA & JA \\
    \midrule
    32 & 46.3 & 31.2 & 72.4 & 67.1 \\
    64 & 49.9 & 28.4 & 72.8 & 72.1 \\ 
    128 & 50.6 & 26.8 & 72.8 & 68.4 \\ 
    256 & 51.1 & 26.8 & 73.3 & 67.1 \\ 
    512 & 51.4 & 26.2 & 73.4 & 71.5 \\ 
    1024 & 49.9 & 27.2 & 73.0 & 69.2 \\ 
    2048 & 51.7 & 25.6 & 72.8 & 68.4 \\ 
    4096 & 50.4 & 26.4 & 72.6 & 71.0 \\ 
    8192 & 51.98 & 26.0 & 73.8 & 71.3 \\
    16384 & 52.2 & 25.6 & 74.4 & 70.8 \\
    32768 & 54.5 & 24.0 & 74.9 & 72.5 \\
    65536 & 54.5 & 23.8 & 74.3 & 74.1 \\
    \bottomrule
  \end{tabular}
  \end{sc}
}

\subfloat[.48\textwidth][Rotated SVHN (2 Layers)]{
  \begin{sc}
  \begin{tabular}{ccccc}
    \toprule
    Width & AA & AF & LA & JA \\
    \midrule
    32 & 30.6 & 42.4 & 69.9 & 61.0 \\
        64 & 32.5 & 44.2 & 72.1 & 70.4 \\
        128 & 34.9 & 43.8 & 74.4 & 72.9 \\
        256 & 33.4 & 45.7 & 75.7 & 75.7 \\
        512 & 36.5 & 43.5 & 76.7 & 77.2 \\
        1024 & 37.2 & 44.2 & 76.7 & 78.7 \\
        2048 & 36.5 & 44.3 & 77.9 & 77.8 \\
        4096 & 38.3 & 43.1 & 77.5 & 79.1 \\ 
        8192 & 36.8 & 44.9 & 78.9 & 78.4 \\ 
        16384 & 36.0 & 46.2 & 80.0 & 79.6 \\
        32768 & 37.7 & 45.1 & 80.5 & 80.4 \\
    \bottomrule
  \end{tabular}
  \end{sc}
}
\subfloat[.48\textwidth][Rotated GTSRB (2 Layers)]{
\begin{sc}
  \begin{tabular}{ccccc}
    \toprule
    Width & AA & AF & LA & JA \\
    \midrule
    32 & 41.1 & 35.8 & 70.6 & 66.9 \\
    64 & 43.0 & 33.6 & 70.2 & 65.6 \\
    128 & 46.4 & 31.6 & 72.6 & 66.3\\
    256 & 45.5 & 32.4 & 72.1 & 70.5 \\
    512 & 42.9 & 34.0 & 71.4 & 68.5 \\
    1024 & 47.0 & 32.1 & 73.8 & 65.9 \\
    2048 & 46.2 & 32.7 & 72.2 & 73.8 \\
    4096 & 48.0 & 29.8 & 72.6 & 73.0 \\
    8192 & 48.6 & 30.3 & 73.9 & 72.4\\
    16384 & 48.9 & 30.1 & 74.4 & 70.0\\
    32768 & 49.4 & 30.8 & 75.9 & 74.4 \\
    \bottomrule
  \end{tabular}
  \end{sc}
}

\subfloat[.48\textwidth][Rotated SVHN (3 Layers)]{
  \begin{sc}
  \begin{tabular}{ccccc}
    \toprule
    Width & AA & AF & LA & JA \\
    \midrule
    32 & 29.7 & 42.5 & 68.3 & 63.4 \\ 
    64 & 32.1 & 44.5 & 72.0 & 70.5 \\
    128 & 34.1 & 44.3 & 75.7 & 74.9 \\
    256 & 35.9 & 43.9 & 75.5 & 77.6 \\
    512 & 36.9 & 44.2 & 76.3 & 77.5 \\
    1024 & 37.0 & 44.5 & 77.1 & 79.8 \\
    2048 & 37.1 & 44.7 & 77.9 & 79.8 \\
    4096 & 37.8 & 43.1 & 77.8 & 80.0 \\
    8192 & 38.5 & 44.4 & 78.3 & 80.3 \\
    16384 & 38.7 & 44.0 & 79.7 & 80.5 \\
    32768 & 39.7 & 43.0 & 80.0 & 81.7\\
    \bottomrule
  \end{tabular}
  \end{sc}
}
\subfloat[.48\textwidth][Rotated GTSRB (3 Layers)]{
\begin{sc}
  \begin{tabular}{ccccc}
    \toprule
    Width & AA & AF & LA & JA \\
    \midrule
    32 & 39.9 & 33.8 & 66.6 & 67.4\\
    64 & 40.6 & 34.5 & 68.2 & 67.5 \\
    128 & 43.0 & 32.8 & 69.5 & 68.9 \\
    256 & 44.1 & 31.8 & 69.6 & 69.3 \\
    512 & 43.0 & 33.9 & 69.4 & 73.3 \\
    1024 & 41.3 & 36.9 & 71.4 & 72.0 \\
    2048 & 45.2 & 32.0 & 70.7 & 73.0 \\
    4096 & 44.3 & 33.7 & 72.5 & 71.4  \\
    8192 & 44.7 & 34.6 & 72.9 & 72.3 \\
    16384 & 43.1 & 35.6 & 73.5 & 74.8 \\
    32768 & 46.6 & 33.9 & 75.8 & 72.8 \\
    \bottomrule
  \end{tabular}
  \end{sc}
}
\caption{We report the numbers from our experiments on more layers on SVHN and GTRSB. We see that the trend of diminishing returns holds. We also see that increasing depth often leads to an increase in forgetting.}
\label{fig:svhngstrb}
\end{figure*}

\subsection{Effect of Dropout on Distance from Initialization}
We explore how Dropout affects the distance from initialization. Dropout is also a common technique employed to enforce sparsity in a trained network. To test the connection between Dropout values and distance from initialization, we train networks of varying widths with Dropout probabilities in $0.1$, $0.2$ and $0.3$. We plot the results of these experiments in \Cref{fig:dropoutdistance}. We see that across different Dropout probabilities, the connection between Distance from Initialization and network width holds. We also add the predicted curve formed from \Cref{ass:distance}. 
\begin{figure}
    \centering
    \begin{subfigure}[b]{0.48\textwidth}
        \centering
        \includegraphics[width=\textwidth]{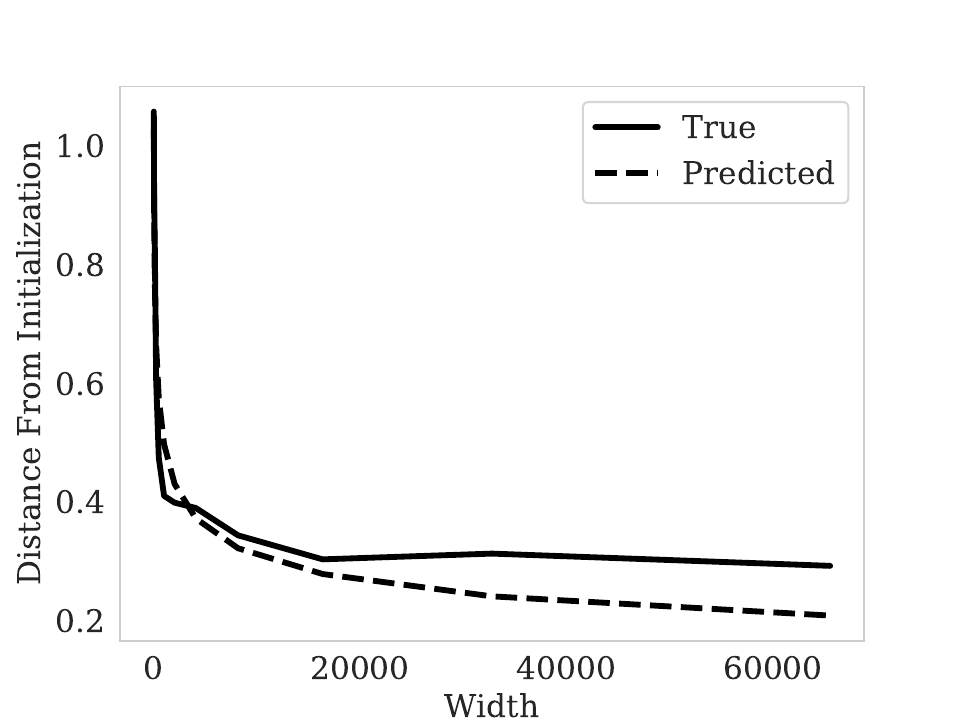}
        \caption{$.1$ Dropout Probability}
        \label{fig:sub1}
    \end{subfigure}
    \hfill
    \begin{subfigure}[b]{0.48\textwidth}
        \centering
        \includegraphics[width=\textwidth]{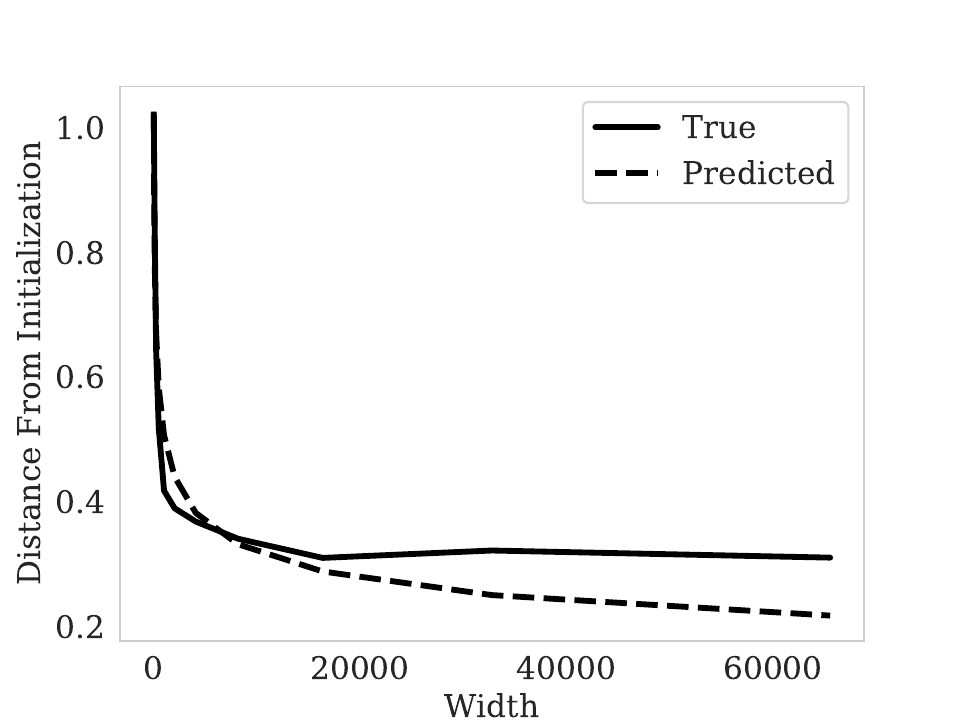}
        \caption{$.2$ Dropout Probability}
        \label{fig:sub2}
    \end{subfigure}
    \\
    \begin{subfigure}[b]{0.48\textwidth}
        \centering
        \includegraphics[width=\textwidth]{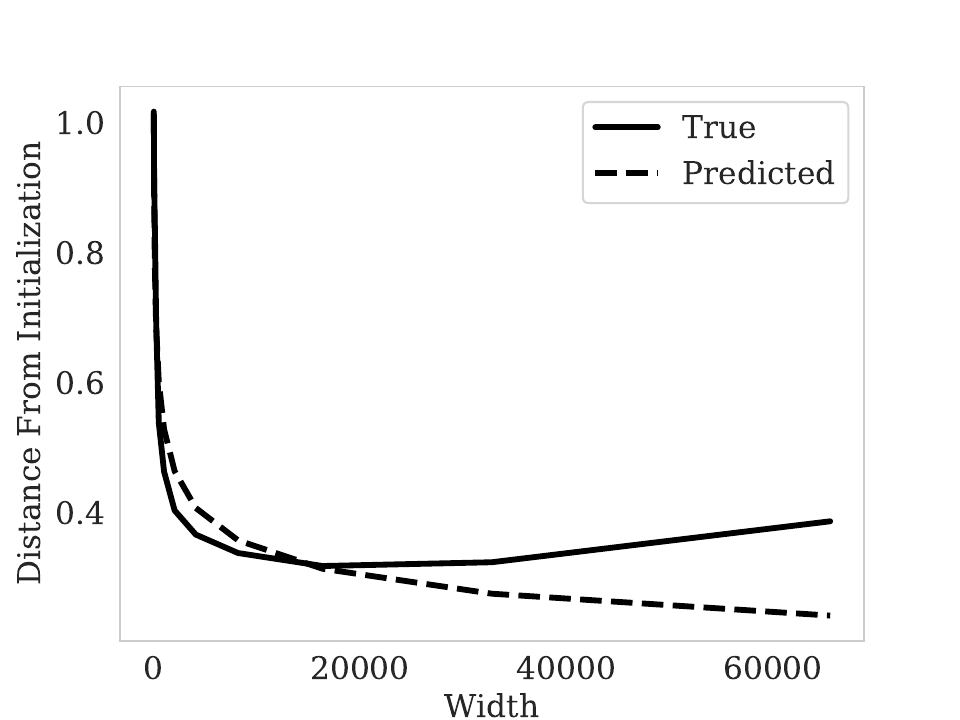}
        \caption{$.3$ Dropout Probability}
        \label{fig:sub3}
    \end{subfigure}
    \caption{Distance from Initialization  for different Dropout probabilities.}
    \label{fig:dropoutdistance}
\end{figure}

\subsection{Results at larger depths}
As an extension to the experiments in the main body of this paper, we run our experiments at larger depths. Specifically, we train networks of depths $4$, $5$, and $6$. We plot our results in \Cref{fig:moredepths}. Across most datasets and widths, we see that diminishing returns in terms of forgetting occurs as width increases. We highlight that at $6$ hidden layers, we see that the results are relatively noisy. This may be explained by the vanishing gradients that occurs at this depth. 

\begin{figure}
    \centering
    \begin{subfigure}[b]{0.48\textwidth}
        \centering
        \includegraphics[width=\textwidth]{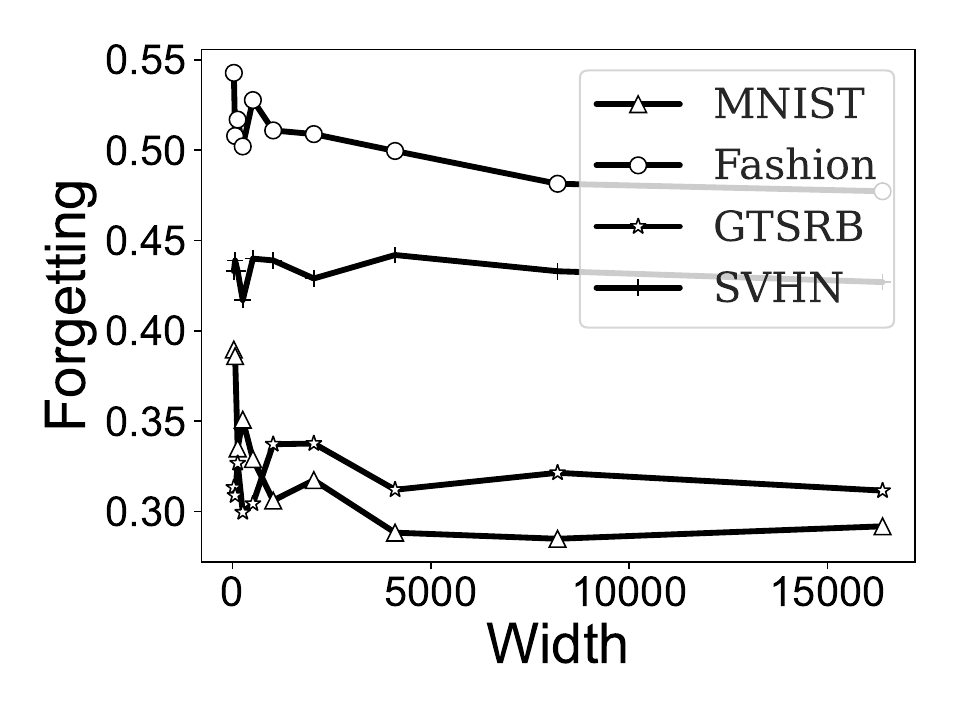}
        \caption{$4$ Hidden Layers}
        \label{fig:sub1}
    \end{subfigure}
    \hfill
    \begin{subfigure}[b]{0.48\textwidth}
        \centering
        \includegraphics[width=\textwidth]{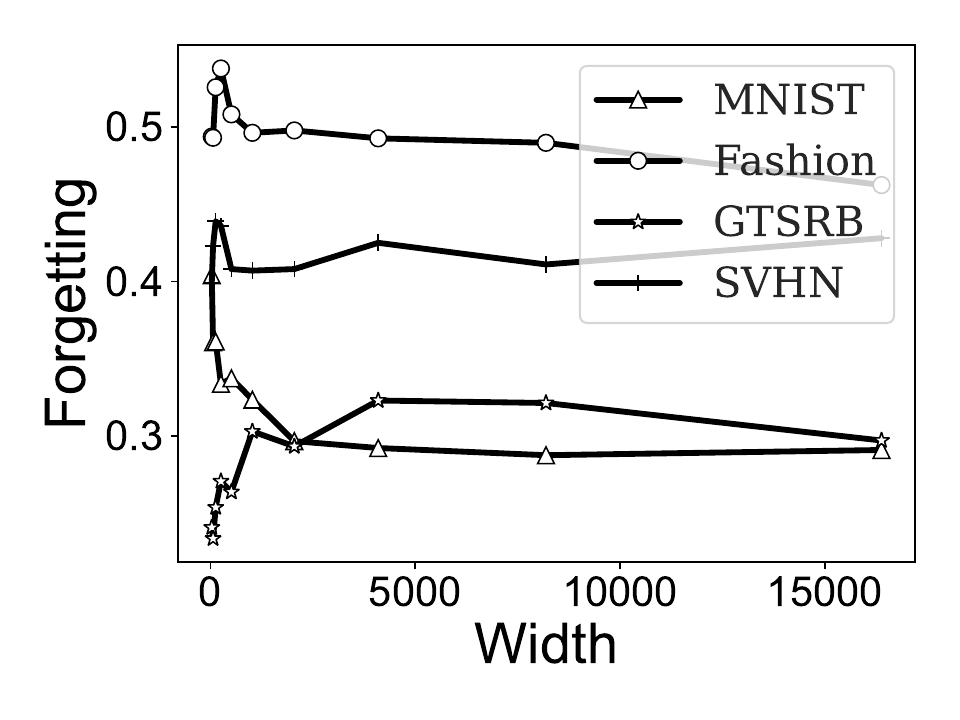}
        \caption{$5$ Hidden Layers}
        \label{fig:sub2}
    \end{subfigure}
    \\
    \begin{subfigure}[b]{0.48\textwidth}
        \centering
        \includegraphics[width=\textwidth]{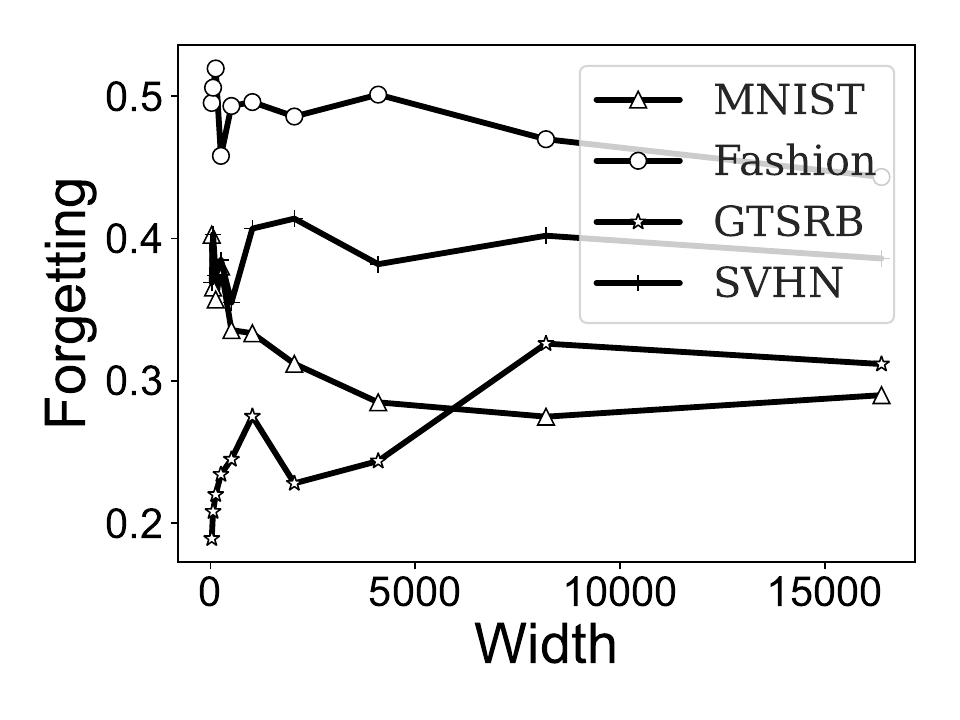}
        \caption{$6$ Hidden Layers}
        \label{fig:sub3}
    \end{subfigure}
    \caption{We plot forgetting as width of the network is increased for different number of hidden layers. }
    \label{fig:moredepths}
\end{figure}

\begin{figure}
    \centering
    \includegraphics[scale=.8]{images/Depth_again.pdf}
    \caption{Average Forgetting as Depth is increased. We see that as depth is increased, forgetting increases. However, as depth is increased further, the accuracy goes down due to vanishing gradients. This artificially causes the forgetting to decrease.}
    \label{fig:moredepthsaverage}
\end{figure}

\section{Theoretical Analysis}
We will first prove the main claim connecting width and continual learning \Cref{lem:ourperturbation}
in \Cref{sec:theorem51}. We then extend this analysis to noise stability in \Cref{sec:noise_stabillity}.

\subsection{Proof of \Cref{lem:ourperturbation}}
\label{sec:theorem51}
We formalize the proof of \Cref{lem:ourperturbation} given the intuition from proof sketch of from the main body.  To do the analysis for $\epsilon_{t, t+1}$, we split the proof into three parts: (1) finding how many active rows are shared between anything two layers, (2) finding how far these active rows can change during training, and (3) combining the two parts together using perturbation analysis.

\intersection*
\begin{proof}
    Let $\mathbb{I}_i$ be the indicator random variable of whether a row $i$ is active in both models $\mathbf{M}_t$ and $\mathbf{M}_{t+1}$, i.e. $\mathbb{I}_i$ if $i \in \mathcal{A}_{t, l} \land i \in  \mathcal{A}_{t^{\prime}, l}$ and $0$ otherwise. Therefore, the expected value of the size of the intersection of the two sets $\mathcal{A}_{t, l}$ and $\mathcal{A}_{t^{\prime}}$ is 

    \begin{align}
        \mathbb{E}(|\mathcal{A}_{t, l} \cap \mathcal{A}_{t^{\prime}, l}|) &=\mathbb{E}\left[\sum_{i \in [W]} \mathbb{I}_i\right]\nonumber\\
        &=\sum_{i \in [W]}\mathbb{E}\left[ \mathbb{I}_i\right]\nonumber
    \end{align}
    Now, given that the probability that each row $i$ is in a given active set with probability $\alpha$, the probability a row is randomly in both active sets is $\alpha^2$, i.e. $\mathbb{E}(\mathbb{I}_i) = \alpha^2$. Given there are $W$ total rows this yields
    $\mathbb{E}(|\mathcal{A}_{t, l} \cap \mathcal{A}_{t^{\prime}, l}|) = \alpha^2 W$. 
\end{proof}

\perturbanytask*
\begin{proof}
    
    Now, in expectation, the size of $\mathcal{A}_{t, l}$ is $\alpha W$, i.e.
    $$\mathbb{E}(|\mathcal{A}_{t, l}|) = \alpha W\text{.}$$
    Only $\alpha^2W$ of the $\alpha W$ active rows $\mathcal{A}_{t, l}$ will intersect $\mathcal{A}_{t + 1, l}$ in expectation, from \Cref{lem:intersection}. Therefore, when training $\mathbf{M}_{t+1}$ only $\alpha^2W$ of the $\alpha W$ rows in expectation will change from its initialization. The rest of the rows will stay unchanged during training for task $t+1$.
    Now, $\mathbf{A}_{t, l}[\mathcal{A}_{t, l}](x) - \mathbf{A}_{t+1, l}[\mathcal{A}_{t, l}]$ is a matrix in $\mathbb{R}^{\mathcal{A}_{t, l} \times W}$. Moreover, this matrix will have $\alpha(1 - \alpha)W$ rows of all $0$'s in expectation. For the other rows, we know from \Cref{ass:distance}, the difference of the two layers indexed by $\mathcal{A}_{t+1, l}$ is bounded by 
    \begin{align}
       \frac{\|\mathbf{A}_{l, t+1}[\mathcal{A}_{l, t+1}] - \mathbf{A}_{l, t}[\mathcal{A}_{l, t+1}]\|_F}{\|\mathbf{A}_{l, t}[\mathcal{A}_{l, t+1}]\|_2} &\leq \gamma[\alpha W]^{-\beta}\nonumber 
    \end{align}
    By the definition of Frobenius Norm
    \begin{align}
        \sum_{i \in \mathcal{A}_{l, t+1}} \|\mathbf{A}_{l, t+1}[i] - \mathbf{A}_{l, t}[i]\|_2^2 &= \|\mathbf{A}_{l, t+1}[\mathcal{A}_{l, t+1}] - \mathbf{A}_{l, t}[\mathcal{A}_{l, t+1}]\|_F^2 \nonumber\\
        &\leq \gamma^2[\alpha W]^{-2\beta}\|\mathbf{A}_{l, t}[\mathcal{A}_{l, t+1}]\|_2^2\nonumber 
    \end{align}
    
    Since the expectation of sum is the sum of expectation, we have that 
$$\mathbb{E}\left(\sum_{i \in \mathcal{A}_{l, t+1}} \|\mathbf{A}_{l, t+1}[i] - \mathbf{A}_{l, t}[i]\|_2^2\right) = \sum_{i \in \mathcal{A}_{l, t+1}} \mathbb{E}\left(\|\mathbf{A}_{l, t+1}[i] - \mathbf{A}_{l, t}[i]\|_2^2\right)\text{.}$$ Since all rows are exchangeable under training, we have that the expected $\ell_2$ norm of a row is upper bounded by
$$\mathbb{E}\left[\|\mathbf{A}_{l, t+1}[i] - \mathbf{A}_{l, t}[i]\|_2^2\right] \leq \gamma^2[\alpha W ]^{-2\beta - 1}\|\mathbf{A}_{l, t}[\mathcal{A}_{l, t+1}]\|_2^2\text{.}$$
Therefore, for rows in both active sets $\mathcal{A}_{l, t+1} \cup \mathcal{A}_{l, t}$, the expected $\ell_2$ norm of the difference of the rows in $\mathbf{M}_t$ and $\mathbf{M}_{t+1}$ is $\gamma^2[\alpha W ]^{-2\beta - 1}$. For notational ease, let $\mathcal{I} =  \mathcal{A}_{t, l} \cap \mathcal{A}_{t+1, l}$ denote the intersection of the active rows while $\mathcal{O} = \mathcal{A}_{t, l} \cap \mathcal{A}_{t+1, l}^C$ denote the active rows not in the active task.  Therefore, 

    \begin{align}
        \mathbb{E}&\left[\|\mathbf{A}_{t, l}[\mathcal{A}_{t, l}] - \mathbf{A}_{t+1, l}[\mathcal{A}_{t, l}]\|_2\right] \nonumber\\
        &\leq \mathbb{E}\left[\left\|\mathbf{A}_{t, l}[\mathcal{I}] - \mathbf{A}_{t+1, l}[\mathcal{I}]\right \|_2  +  \left\|\mathbf{A}_{t, l}[\mathcal{O}] - \mathbf{A}_{t+1, l}[\mathcal{O}]\right \|_2 \right]\label{eq:57}\\
        &= \mathbb{E}\left[ \left\|\mathbf{A}_{t, l}[\mathcal{I}] - \mathbf{A}_{t+1, l}[\mathcal{I}]\right \|_2\right]\label{eq:58}\\
        &\leq \mathbb{E}\left[ \left\|\mathbf{A}_{t, l}[\mathcal{I}] - \mathbf{A}_{t+1, l}[\mathcal{I}]\right \|_F\right]\nonumber\\
        &\leq \sqrt{\sum_{i \in \mathcal{I}} \mathbb{E}\left[ \left\|\mathbf{A}_{t, l}[i] - \mathbf{A}_{t+1, l}[i]\right \|_2^2\right]}\label{eq:61}\\
        &\leq \sqrt{\alpha^2W \cdot  \gamma^2[\alpha W ]^{-2\beta - 1}\|\mathbf{A}_{l, t}[\mathcal{A}_{l, t+1}]\|_2^2} \nonumber\\
        &\leq \gamma W^{-\beta} \alpha^{\frac{1-2\beta}{2}}\|\mathbf{A}_{l, t}[\mathcal{A}_{l, t+1}]\|_2 \nonumber\\
         &= \gamma W^{-\beta} \alpha^{\frac{1-2\beta}{2}}\|\mathbf{A}_{l, t}[\mathcal{A}_{l, t}]\|_2 \lambda_{t, t+1}^l\nonumber
    \end{align}
    
Here, \Cref{eq:57} comes from splitting the matrix-multiply by different rows, \Cref{eq:58} comes from seeing that all rows in $\mathcal{O}$ will remain unchanged after training, and \Cref{eq:61} comes from $\mathbb{E}(\sqrt{X}) \leq \sqrt{\mathbb{E}(X)}$ by Jensen's Inequality for a random variable $X$.

From here, we have that
$$\mathbb{E} \left[\frac{\|\mathbf{A}_{t, l}[\mathcal{A}_{t, l}] - \mathbf{A}_{t+1, l}[\mathcal{A}_{t, l}]\|_2}{\|\mathbf{A}_{l, t}[\mathcal{A}_{l, t}]\|_2 } \right]\leq  \gamma W^{-\beta} \alpha^{\frac{1-2\beta}{2}}\lambda_{t, t+1}^l$$

\end{proof}
\begin{restatable}{lemma}{singleperturb}
    \label{lem:singleperturb}
        Say we generate a series of models $\mathbf{M}_1, \dots, \mathbf{M}_T$ by training sequentially on datasets $\mathcal{D}_1, \dots, \mathcal{D}_T$.  Let $\lambda_{i, j}^l = \frac{\|\mathbf{A}_{l, j}[\mathcal{A}_{l, i}]\|_2}{\|\mathbf{A}_{l, i}[\mathcal{A}_{l, i}]\|_2}$ denote the ratio of the spectral norms of the weights of different row indeces for different tasks.  Moreover, let $\bar{\lambda} = \underset{l \in [L], i, j \in [T]}{\max}\lambda_{i, j}^l$.  For any input vector from the $i$th dataset $x \in \mathcal{D}_i$, the $\ell_2$ norm of the difference of the outputs from models $\mathbf{M}_t$ and $\mathbf{M}_{t+1}$ are upper bounded by 
    $$\forall x \in \mathcal{D}_t, \quad \mathbb{E}\left[\|\mathbf{M}_t(x) - \mathbf{M}_{t+1}(x) \|_2 \right]\leq L2^L\bar{\lambda}\chi \left(  \prod_{l=1}^L L_l\|\mathbf{A}_{t, l}\|_2 \right)  \gamma W^{-\beta} \alpha^{\frac{1-2\beta}{2}}\text{.}$$ 
\end{restatable}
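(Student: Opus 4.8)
The plan is to push the per-layer weight perturbations established in \Cref{lem:perturbanytask} through the network by a layer-by-layer perturbation argument, which is precisely the role played by Lemma 2 of \citet{neyshabur2018towards}. First I would record that, because inactive rows are frozen during training on task $t+1$ (the observation underlying \Cref{lem:perturbanytask}), the only weight gap that the forward pass of $\mathbf{M}_t$ and $\mathbf{M}_{t+1}$ on an input $x\in\mathcal{D}_t$ actually sees at layer $l$ is $\Delta_l := \mathbf{A}_{t,l}[\mathcal{A}_{t,l}] - \mathbf{A}_{t+1,l}[\mathcal{A}_{t,l}]$, whose expected spectral norm is controlled by \Cref{lem:perturbanytask}. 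Bounding $\|\mathbf{A}_{t,l}[\mathcal{A}_{t,l}]\|_2 \le \|\mathbf{A}_{t,l}\|_2$ and $\lambda_{t,t+1}^l \le \bar\lambda$ then converts that into $\mathbb{E}\|\Delta_l\|_2 \le \bar\lambda\,\gamma W^{-\beta}\alpha^{\frac{1-2\beta}{2}}\|\mathbf{A}_{t,l}\|_2$, which is the quantity I want to propagate.

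Next I would set up the propagation recursion. Let $x_{t,l}$ and $x_{t+1,l}$ denote the outputs of the first $l$ layers of the two models on the same $x$, and let $\delta_l = \|x_{t,l}-x_{t+1,l}\|_2$ with $\delta_0$ taken as $\|x\|_2 \le \chi$. Adding and subtracting $\mathbf{A}_{t,l}\phi_{l-1}(x_{t+1,l-1})$ and using the $L_l$-Lipschitzness of the activations together with submultiplicativity of the spectral norm gives
$$\delta_l \;\le\; L_{l}\,\|\mathbf{A}_{t,l}\|_2\,\delta_{l-1} \;+\; \|\Delta_l\|_2\,\bigl\|\phi_{l-1}(x_{t+1,l-1})\bigr\|_2,$$
where the forward norm $\|\phi_{l-1}(x_{t+1,l-1})\|_2$ is itself bounded by $\chi\prod_{j<l}L_j\|\mathbf{A}_{t+1,j}\|_2$ by iterating the forward pass from $\|x\|_2\le\chi$. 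This is the crucial step: the add-and-subtract separates the \emph{fresh} error injected by the layer-$l$ weight gap $\Delta_l$ from the \emph{inherited} error $\delta_{l-1}$ carried up from shallower layers.

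I would then unroll this linear recursion, so that the output discrepancy $\delta_L$ becomes a sum over layers $k$ of the error injected at layer $k$, each amplified by the downstream operator-norm and Lipschitz factors. Factoring the common product $\prod_{l=1}^L L_l\|\mathbf{A}_{t,l}\|_2$ outside, the remaining sum over the $L$ layers produces the leading factor $L$, the input-norm bound supplies the $\chi$, and substituting $\mathbb{E}\|\Delta_k\|_2 \le \bar\lambda\gamma W^{-\beta}\alpha^{\frac{1-2\beta}{2}}\|\mathbf{A}_{t,k}\|_2$ supplies the width/sparsity factor together with $\bar\lambda$. Because I am not assuming the perturbation is small (unlike the $(1{+}1/L)^L\le e$ regime of \citet{neyshabur2018towards}), the worst-case bookkeeping in which each layer can at most double the accumulated discrepancy is what yields the constant $2^L$. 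Taking expectations at the end is legitimate since the weight norms $\|\mathbf{A}_{t,l}\|_2$ come from the already-trained model $\mathbf{M}_t$ and are deterministic under the conditioning, so by linearity of expectation they pull out of the sum and only the in-expectation bound of \Cref{lem:perturbanytask} is needed for each $\mathbb{E}\|\Delta_k\|_2$.

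The main obstacle I anticipate is that the two networks differ \emph{simultaneously} in their weights and in every intermediate activation, so the nonlinearity cannot be dispatched by a single operator-norm inequality; the per-layer add-and-subtract decomposition is what makes the recursion linear and tractable, and pinning down the precise multiplicative constant (the $2^L$) requires either invoking the off-the-shelf perturbation lemma of \citet{neyshabur2018towards} directly or tracking the worst-case layerwise doubling by hand. A secondary subtlety is interchanging the expectation with the products of norms, which I would handle by conditioning on $\mathbf{M}_t$ and using that \Cref{lem:perturbanytask} already holds in expectation over the random task-$(t+1)$ row selection.
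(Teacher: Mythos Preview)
Your proposal is essentially the paper's own proof: the same Neyshabur-style layer-by-layer induction via add-and-subtract, the same crude bound $\|\mathbf{A}_{t,l}\|_2+\|\Delta_l\|_2\le 2\|\mathbf{A}_{t,l}\|_2$ producing the $2^L$, then applying \Cref{lem:perturbanytask} in expectation together with $\lambda_{t,t+1}^l\le\bar\lambda$ and $\|\mathbf{A}_{t,l}[\mathcal{A}_{t,l}]\|_2\le\|\mathbf{A}_{t,l}\|_2$. The only slip is that $\delta_0$ should be $0$ (both models share the input $x$), not $\|x\|_2$; the $\chi$ enters only through the forward-norm bound on $\|\phi_{l-1}(x_{\cdot,l-1})\|_2$, exactly as you note later.
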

\begin{proof}
    We will begin by proving the perturbation analysis
      The first part of this proof mainly follows from \citet{Neyshabur2017}. We restate it here with the differing notation for clarity and completeness. We will prove the induction hypothesis that for any $x \in \mathcal{D}_t$,
      $$\|\mathbf{M}_{t, l}(x) -\mathbf{M}_{t+1, l}(x)\|_2 \leq 2^l \|x\|_2 \left(  \prod_{i=1}^l L_i\|\mathbf{A}_{t, l}[\mathcal{A}_{t, l}]\|_2 \right) \sum_{i=1}^{l}\frac{\|\mathbf{A}_{t, l}[\mathcal{A}_{t, l}] - \mathbf{A}_{t+1, l}[\mathcal{A}_{t, l}]\|_2}{\|\mathbf{A}_{t, l}[\mathcal{A}_{t, l}]\|_2}\text{.}$$ Here, $\mathbf{M}_{t, l}$ and $\mathbf{M}_{t+1, l}$ denote the models $\mathbf{M}_{t}$ and $\mathbf{M}_{t^\prime}$ respectively with only the first $l$ layers. The base case of induction trivially holds, given that $\|\mathbf{M}_{t, 0}(x) -\mathbf{M}_{t+1, 0}(x)\|_2 = 0$ by definition. Now, we prove the induction step. Assume that the induction hypothesis holds for $l-1$. We will prove that it holds for $l$.  For notational ease, denote $\mathbf{U}_{t, l} = \mathbf{A}_{t, l}[\mathcal{A}_{t, l}] - \mathbf{A}_{t+1, l}[\mathcal{A}_{t, l}]$, $x_{t,l} = \mathbf{M}_{t, l}(x)$, and $x_{t+1,l} = \mathbf{M}_{t+1, l}(x)$.  We have that
    \begin{align}
        \|x_{t, l} &- x_{t+1, l}\|_2 \nonumber\\
        &\leq \|\left(\mathbf{A}_{t, l}[\mathcal{A}_{t,l}] + \mathbf{U}_{t, l}\right)\phi_l(x_{t+1, l-1}) - \mathbf{A}_{t, l}[\mathcal{A}_{t,l}]\phi_l(x_{t, l-1})\|_2\nonumber\\
        &\leq \|\left(\mathbf{A}_{t, l}[\mathcal{A}_{t,l}] + \mathbf{U}_{t, l}\right)(\phi_l(x_{t+1, l-1}) - \phi_l(x_{t, l-1})) + \mathbf{U}_{t, l}\phi_l(x_{t, l-1})\|_2\nonumber\\
        &\leq \left(\|\mathbf{A}_{t, l}[\mathcal{A}_{t,l}]\|_2 + \|\mathbf{U}_{t, l}\|_2\right)\|\phi_l(x_{t+1, l-1}) - \phi_l(x_{t, l-1})\|_2 + \|\mathbf{U}_{t, l}\|_2\|\phi_l(x_{t, l-1})\|_2\nonumber\\
        &\leq L_l  \left(\|\mathbf{A}_{t, l}[\mathcal{A}_{t,l}]\|_2 + \|\mathbf{U}_{t, l}\|_2\right)\|x_{t+1, l-1} - x_{t, l-1}\|_2 + L_l \|\mathbf{U}_{t, l}\|_2\|x_{t, l-1}\|_2 \label{eq:ney2}\\
        &\leq 2L_l\left(\|\mathbf{A}_{t, l}[\mathcal{A}_{t,l}]\|_2\right)\|x_{t+1, l-1} - x_{t, l-1}\|_2 + L_l \|\mathbf{U}_{t, l}\|_2\|x_{t, l-1}\|_2\nonumber\\
        &\leq 2L_l \left(\|\mathbf{A}_{t, l}[\mathcal{A}_{t,l}]\|_2\right)\left(1 + \frac{1}{L}\right)^{l-1} \|x_{t, 0}\|_2 \left(  \prod_{i=1}^{l-1} L_i \|\mathbf{A}_{t, i}[\mathcal{A}_{t,i}]\|_2 \right) \sum_{i=1}^{l-1}\frac{\|\mathbf{U}_{t, l}\|_2}{\|\mathbf{A}_{t, i}[\mathcal{A}_{t,i}]\|_2} \nonumber \\
        & \quad \quad \quad \quad \quad \quad \quad \quad \quad \quad  \quad \quad \quad \quad \quad + L_l \|\mathbf{U}_{t, l}\|_2\|x_{t, l-1}\|_2 \label{eq:ney1}\\
        &\leq 2^lL_l\left(  \prod_{i=1}^{l-1} L_i \|\mathbf{A}_{t, i}[\mathcal{A}_{t,i}]\|_2 \right) \sum_{i=1}^{l-1}\frac{\|\mathbf{U}_{t, i}\|_2}{\|\mathbf{A}_{t, i}[\mathcal{A}_{t,i}]\|_2}\|x_{t, 0}\|_2  \nonumber\\
        &\quad \quad \quad \quad \quad \quad \quad \quad \quad \quad  \quad \quad \quad \quad \quad  + L_l \|x_{t, 0}\|_2 \|\mathbf{U}_{t, l}\|_2\prod_{i=1}^{l-1}L_{i}\|\mathbf{A}_{t, i}[\mathcal{A}_{t,i}]\|_2\nonumber\\
        &\leq 2^lL_l\left(  \prod_{i=1}^{l-1} L_i  \|\mathbf{A}_{t, i}[\mathcal{A}_{t,i}]\|_2 \right) \sum_{i=1}^{l-1}\frac{\|\mathbf{U}_{t, l}\|_2}{\|\mathbf{A}_{t, i}[\mathcal{A}_{t,i}]\|_2}\|x_{t, 0}\|_2 \nonumber\\
        &\quad \quad \quad \quad \quad \quad \quad \quad \quad \quad  \quad \quad \quad \quad \quad + \|x_{t, 0}\|_2 \frac{\|\mathbf{U}_{t, l}\|_2}{\|\mathbf{A}_{t, l}[\mathcal{A}_{t,l}]\|_2}\prod_{i=1}^{l}L_{i}\|\mathbf{A}_{t, i}[\mathcal{A}_{t,i}]\|_2\nonumber\\
        &\leq 2^L\left(  \prod_{i=1}^{l} L_i \|\mathbf{A}_{t, i}[\mathcal{A}_{t,i}]\|_2 \right) \sum_{i=1}^{l}\frac{\|\mathbf{U}_{t, l}\|_2}{\|\mathbf{A}_{t, i}[\mathcal{A}_{t,i}]\|_2}\|x_{t, 0}\|_2 \nonumber
    \end{align}
    Here, \Cref{eq:ney2} comes from the fact that $\phi_l$ is $L_l$-Lipschitz smooth and that $\phi_l(0) = 0$. Moreover, \Cref{eq:ney1} comes from applying the induction hypothesis. Therefore, we have proven the induction hypothesis for all layers.  We have 
     $$\forall x \in \mathcal{D}_t, \quad \|\mathbf{M}_t(x) - \mathbf{M}_{t+1}(x) \|_2 \leq 2^L\|x\|_2 \left(  \prod_{l=1}^L L_l\|\mathbf{A}_{t, l}[\mathcal{A}_{t,l}]\|_2 \right) \sum_{l=1}^{L}\frac{\|\mathbf{U}_{t, l}\|_2}{\|\mathbf{A}_{t, l}[\mathcal{A}_{t,l}]\|_2}\text{.}$$ 
     Denoting $\chi = \max_{x \in \mathcal{D}_t} \|x\|_2$ and using \Cref{lem:perturbanytask}
      $$\forall x \in \mathcal{D}_t, \quad\mathbb{E}\left[ \|\mathbf{M}_t(x) - \mathbf{M}_{t+1}(x) \|_2\right] \leq 2^L\chi \left(  \prod_{l=1}^L L_l\|\mathbf{A}_{t, l}[\mathcal{A}_{t,l}]\|_2 \right) \sum_{l=1}^{L}\lambda_{t, t+1}^l \gamma W^{-\beta} \alpha^{\frac{1-2\beta}{2}}\text{.}$$ 
      Here, we reminded the reader that $\lambda_{i, j}^l = \frac{\|\mathbf{A}_{l, i}[\mathcal{A}_{l, i}]\|_2}{\|\mathbf{A}_{l, i}[\mathcal{A}_{l, j}]\|_2}$ denotes the ratio of the spectral norms of the weights of different row indeces for different tasks.  Moreover, for any matrix, removing rows cannot increase the matrix norm, we have that $\|\mathbf{A}_{t,l}\|_2 \geq \|\mathbf{A}_{t,l}[\mathcal{A}_{t,l}\|_2$. 
      Therefore, we have $$\prod_{l=1}^L \|\mathbf{A}_{t, l}[\mathcal{A}_{t,l}]\|_2 \leq \prod_{l=1}^L \|\mathbf{A}_{t, l}\|_2\text{.}$$ Given $\bar{\lambda} = \underset{l \in [L], i, j \in [T]}{\max}\lambda_{i, j}^l$,
      $$\forall x \in \mathcal{D}_t, \quad \mathbb{E}\left[\|\mathbf{M}_t(x) - \mathbf{M}_{t+1}(x) \|_2 \right]\leq L2^L\bar{\lambda}\chi \left(  \prod_{l=1}^L L_l\|\mathbf{A}_{t, l}\|_2 \right)  \gamma W^{-\beta} \alpha^{\frac{1-2\beta}{2}}\text{.}$$ 

\end{proof}

\ourperturbation*
\begin{proof}
    We need only repeat the proof above but with a different perturbation to account for the number of tasks. We repeat it for clarity.
   We will prove the induction hypothesis that for any $x \in \mathcal{D}_t$,
      $$\|\mathbf{M}_{t, l}(x) -\mathbf{M}_{t^{\prime}, l}(x)\|_2 \leq 2^l \|x\|_2 \left(  \prod_{i=1}^l L_i\|\mathbf{A}_{t, l}[\mathcal{A}_{t, l}]\|_2 \right) \sum_{i=1}^{l}\frac{\|\mathbf{A}_{t, l}[\mathcal{A}_{t, l}] - \mathbf{A}_{t^{\prime}, l}[\mathcal{A}_{t, l}]\|_2}{\|\mathbf{A}_{t, l}[\mathcal{A}_{t, l}]\|_2}\text{.}$$ Here, $\mathbf{M}_{t, l}$ and $\mathbf{M}_{t^{\prime}, l}$ denote the models $\mathbf{M}_{t}$ and $\mathbf{M}_{t^\prime}$ respectively with only the first $l$ layers. The base case of induction trivially holds, given that $\|\mathbf{M}_{t, 0}(x) -\mathbf{M}_{t^{\prime}, 0}(x)\|_2 = 0$ by definition. Now, we prove the induction step. Assume that the induction hypothesis holds for $l-1$. We will prove that it holds for $l$.  For notational ease, denote $\mathbf{U}_{t, l} = \mathbf{A}_{t, l}[\mathcal{A}_{t, l}] - \mathbf{A}_{t^{\prime}, l}[\mathcal{A}_{t, l}]$, $x_{t,l} = \mathbf{M}_{t, l}(x)$, and $x_{t^{\prime},l} = \mathbf{M}_{t^{\prime}, l}(x)$.  We have that
    \begin{align}
        \|x_{t, l} &- x_{t^{\prime}, l}\|_2 \nonumber\\
        &\leq \|\left(\mathbf{A}_{t, l}[\mathcal{A}_{t,l}] + \mathbf{U}_{t, l}\right)\phi_l(x_{t^{\prime}, l-1}) - \mathbf{A}_{t, l}[\mathcal{A}_{t,l}]\phi_l(x_{t, l-1})\|_2\nonumber\\
        &\leq \|\left(\mathbf{A}_{t, l}[\mathcal{A}_{t,l}] + \mathbf{U}_{t, l}\right)(\phi_l(x_{t^{\prime}, l-1}) - \phi_l(x_{t, l-1})) + \mathbf{U}_{t, l}\phi_l(x_{t, l-1})\|_2\nonumber\\
        &\leq \left(\|\mathbf{A}_{t, l}[\mathcal{A}_{t,l}]\|_2 + \|\mathbf{U}_{t, l}\|_2\right)\|\phi_l(x_{t^{\prime}, l-1}) - \phi_l(x_{t, l-1})\|_2 + \|\mathbf{U}_{t, l}\|_2\|\phi_l(x_{t, l-1})\|_2\nonumber\\
        &\leq L_l  \left(\|\mathbf{A}_{t, l}[\mathcal{A}_{t,l}]\|_2 + \|\mathbf{U}_{t, l}\|_2\right)\|x_{t^{\prime}, l-1} - x_{t, l-1}\|_2 + L_l \|\mathbf{U}_{t, l}\|_2\|x_{t, l-1}\|_2 \label{eq:ney2ag}\\
        &\leq 2L_l\left(\|\mathbf{A}_{t, l}[\mathcal{A}_{t,l}]\|_2\right)\|x_{t^{\prime}, l-1} - x_{t, l-1}\|_2 + L_l \|\mathbf{U}_{t, l}\|_2\|x_{t, l-1}\|_2\nonumber\\
        &\leq 2L_l \left(\|\mathbf{A}_{t, l}[\mathcal{A}_{t,l}]\|_2\right)\left(1 + \frac{1}{L}\right)^{l-1} \|x_{t, 0}\|_2 \left(  \prod_{i=1}^{l-1} L_i \|\mathbf{A}_{t, i}[\mathcal{A}_{t,i}]\|_2 \right) \sum_{i=1}^{l-1}\frac{\|\mathbf{U}_{t, l}\|_2}{\|\mathbf{A}_{t, i}[\mathcal{A}_{t,i}]\|_2} \nonumber \\
        & \quad \quad \quad \quad \quad \quad \quad \quad \quad \quad  \quad \quad \quad \quad \quad + L_l \|\mathbf{U}_{t, l}\|_2\|x_{t, l-1}\|_2 \label{eq:ney1ag}\\
        &\leq 2^lL_l\left(  \prod_{i=1}^{l-1} L_i \|\mathbf{A}_{t, i}[\mathcal{A}_{t,i}]\|_2 \right) \sum_{i=1}^{l-1}\frac{\|\mathbf{U}_{t, i}\|_2}{\|\mathbf{A}_{t, i}[\mathcal{A}_{t,i}]\|_2}\|x_{t, 0}\|_2  \nonumber\\
        &\quad \quad \quad \quad \quad \quad \quad \quad \quad \quad  \quad \quad \quad \quad \quad  + L_l \|x_{t, 0}\|_2 \|\mathbf{U}_{t, l}\|_2\prod_{i=1}^{l-1}L_{i}\|\mathbf{A}_{t, i}[\mathcal{A}_{t,i}]\|_2\nonumber\\
        &\leq 2^lL_l\left(  \prod_{i=1}^{l-1} L_i  \|\mathbf{A}_{t, i}[\mathcal{A}_{t,i}]\|_2 \right) \sum_{i=1}^{l-1}\frac{\|\mathbf{U}_{t, l}\|_2}{\|\mathbf{A}_{t, i}[\mathcal{A}_{t,i}]\|_2}\|x_{t, 0}\|_2 \nonumber\\
        &\quad \quad \quad \quad \quad \quad \quad \quad \quad \quad  \quad \quad \quad \quad \quad + \|x_{t, 0}\|_2 \frac{\|\mathbf{U}_{t, l}\|_2}{\|\mathbf{A}_{t, l}[\mathcal{A}_{t,l}]\|_2}\prod_{i=1}^{l}L_{i}\|\mathbf{A}_{t, i}[\mathcal{A}_{t,i}]\|_2\nonumber\\
        &\leq 2^L\left(  \prod_{i=1}^{l} L_i \|\mathbf{A}_{t, i}[\mathcal{A}_{t,i}]\|_2 \right) \sum_{i=1}^{l}\frac{\|\mathbf{U}_{t, l}\|_2}{\|\mathbf{A}_{t, i}[\mathcal{A}_{t,i}]\|_2}\|x_{t, 0}\|_2 \nonumber
    \end{align}
    Here, \Cref{eq:ney2ag} comes from the fact that $\phi_l$ is $L_l$-Lipschitz smooth and that $\phi_l(0) = 0$. Moreover, \Cref{eq:ney1ag} comes from applying the induction hypothesis. Therefore, we have proven the induction hypothesis for all layers.  We have 
     $$\forall x \in \mathcal{D}_t, \quad \|\mathbf{M}_t(x) - \mathbf{M}_{t^{\prime}}(x) \|_2 \leq 2^L\|x\|_2 \left(  \prod_{l=1}^L L_l\|\mathbf{A}_{t, l}[\mathcal{A}_{t,l}]\|_2 \right) \sum_{l=1}^{L}\frac{\|\mathbf{U}_{t, l}\|_2}{\|\mathbf{A}_{t, l}[\mathcal{A}_{t,l}]\|_2}\text{.}$$ 
     Now, 
     \begin{align}
        \frac{\|\mathbf{U}_{t, l}\|_2}{\|\mathbf{A}_{t, l}[\mathcal{A}_{t,l}]\|_2} &= \frac{\|\mathbf{A}_{t, l}[\mathcal{A}_{t, l}] - \mathbf{A}_{t^{\prime}, l}[\mathcal{A}_{t, l}]\|_2}{\|\mathbf{A}_{t, l}[\mathcal{A}_{t,l}]\|_2} \nonumber \\
        &=\frac{\left \|\sum_{i=t}^{t^\prime - 1} \mathbf{A}_{i, l}[\mathcal{A}_{i, l}] - \mathbf{A}_{i+1, l}[\mathcal{A}_{i+1, l}]\right\|_2}{\|\mathbf{A}_{t, l}[\mathcal{A}_{t,l}]\|_2} \nonumber \\
        &\leq\frac{\sum_{i=t}^{t^\prime - 1}\left \| \mathbf{A}_{i, l}[\mathcal{A}_{i, l}] - \mathbf{A}_{i+1, l}[\mathcal{A}_{i+1, l}]\right\|_2}{\|\mathbf{A}_{t, l}[\mathcal{A}_{t,l}]\|_2} \nonumber 
    \end{align}
    Therefore, in expectation,
    \begin{align}
       \mathbb{E}\left[\frac{\|\mathbf{U}_{t, l}\|_2}{\|\mathbf{A}_{t, l}[\mathcal{A}_{t,l}]\|_2} \right] &\leq(t^{\prime} - t)\bar{\lambda}\gamma W^{-\beta} \alpha^{\frac{1-2\beta}{2}}\nonumber 
     \end{align}
     Denoting $\chi = \max_{x \in \mathcal{D}_t} \|x\|_2$ and using \Cref{lem:perturbanytask}
      $$\forall x \in \mathcal{D}_t, \quad \mathbb{E}\left[\|\mathbf{M}_t(x) - \mathbf{M}_{t^{\prime}}(x) \|_2\right] \leq L2^L\chi \left(  \prod_{l=1}^L L_l\|\mathbf{A}_{t, l}[\mathcal{A}_{t,l}]\|_2 \right)(t^{\prime} - t)\bar{\lambda}\gamma W^{-\beta} \alpha^{\frac{1-2\beta}{2}}\text{.}$$ 
      Here, we reminded the reader that $\lambda_{i, j}^l = \frac{\|\mathbf{A}_{l, i}[\mathcal{A}_{l, i}]\|_2}{\|\mathbf{A}_{l, i}[\mathcal{A}_{l, j}]\|_2}$ denotes the ratio of the spectral norms of the weights of different row indeces for different tasks.  Moreover, for any matrix, removing rows cannot increase the matrix norm, we have that $\|\mathbf{A}_{t,l}\|_2 \geq \|\mathbf{A}_{t,l}[\mathcal{A}_{t,l}\|_2$. 
      Therefore, we have $$\prod_{l=1}^L \|\mathbf{A}_{t, l}[\mathcal{A}_{t,l}]\|_2 \leq \prod_{l=1}^L \|\mathbf{A}_{t, l}\|_2\text{.}$$ Given $\bar{\lambda} = \underset{l \in [L], i, j \in [T]}{\max}\lambda_{i, j}^l$,
      $$\forall x \in \mathcal{D}_t, \quad \mathbb{E}\left[\|\mathbf{M}_t(x) - \mathbf{M}_{t^{\prime}}(x) \|_2\right] \leq (t^{\prime} - t)L2^L\bar{\lambda}\chi \left(  \prod_{l=1}^L L_l\|\mathbf{A}_{t, l}\|_2 \right)  \gamma W^{-\beta} \alpha^{\frac{1-2\beta}{2}}\text{.}$$ 
\end{proof}

\subsection{Noise Stability}
\label{sec:noise_stabillity}

\smoothenedproof*
\begin{proof}
    We will prove the induction hypothesis that $$\|\mathbf{M}_{t, l}(x) - \mathbf{M}_{t^{\prime}, l}(x)\|_2 \leq \epsilon_l \|\mathbf{M}_{t, l}(x)\|_2 \text{,}$$
    where $\epsilon_l = \prod_{i}^l a_i \left(\sum_{i=1}^{l} b_{i}\right)$ where $a_i = c_i\mu_{i, t}L_i + \frac{L_i\mu_{i,l}\|\mathbf{U}_{i, l}\|_2c_{l}}{\|\mathbf{A}_{i, l}[\mathcal{A}_{i, l}]\|_2}$ and $b_i = \frac{L_i\mu_{i,l}\|\mathbf{U}_{i, l}\|_2c_{l}}{\|\mathbf{A}_{i, l}[\mathcal{A}_{i, l}]\|_2}$.
    The base case trivially holds given $\|\mathbf{M}_{t, 0}(x) - \mathbf{M}_{t^{\prime}, 0}(x)\|_2 = 0$. Here, $\mathbf{M}_{t, l}$ and $\mathbf{M}_{t^{\prime}, l}$ denote the models $\mathbf{M}_{t}$ and $\mathbf{M}_{t^\prime}$ respectively with only the first $l$ layers. We now perform our induction.
\begin{align}
    \|\mathbf{M}_{t, l}(x) - \mathbf{M}_{t^{\prime}, l}(x)\|_2 &= \|(\mathbf{A}_{t, l}[\mathcal{A}_{t, l}] - \mathbf{U}_{t, l})\phi_{l}(x_{t^\prime, {l-1}}) - \mathbf{A}_{t, l}[\mathcal{A}_{t, l}]\phi_{l}(x_{t, {l-1}})\|_2 \nonumber\\
    &\leq \|(\mathbf{A}_{t, l}[\mathcal{A}_{t, l}](\phi_{l}(x_{t^\prime, {l-1}}) - \phi_{l}(x_{t, {l-1}})) - \mathbf{U}_{t, l}\phi_{l}(x_{t^\prime, {l-1}})\|_2 \nonumber\\
     &\leq \|\mathbf{A}_{t, l}[\mathcal{A}_{t, l}]\|_2\|\phi_{l}(x_{t^\prime, {l-1}}) - \phi_{l}(x_{t, {l-1}})\|_2 + \|\mathbf{U}_{t, l}\|_2\|\phi_{l}(x_{t^\prime, {l-1}})\|_2 \nonumber\\
     &\leq L_l\|\mathbf{A}_{t, l}[\mathcal{A}_{t, l}]\|_2\|x_{t^\prime, {l-1}} - x_{t, {l-1}}\|_2 + L_l\|\mathbf{U}_{t, l}\|_2\|x_{t^\prime, {l-1}}\|_2 \label{eq:lip}\\
     &\leq L_l\epsilon_{l-1}\|\mathbf{A}_{t, l}[\mathcal{A}_{t, l}]\|_2\|x_{t, {l-1}}\|_2 + (1 + \epsilon_{l-1})L_l\|\mathbf{U}_{t, l}\|_2\|x_{t, {l-1}}\|_2 \label{eq:induct}\\
     &\leq c_lL_l\epsilon_{l-1}\|\mathbf{A}_{t, l}[\mathcal{A}_{t, l}]\|_2\|\phi_l(x_{t, l-1})\|_2 + (1 + \epsilon_{l-1})L_l\|\mathbf{U}_{t, l}\|_2\|x_{t, l-1}\|_2 \label{eq:activation contraction}\\
     &\leq c_l\mu_{t, l}L_l\epsilon_{l-1}\|\mathbf{A}_{t, l}[\mathcal{A}_{t, l}]\phi_l(x_{t, l-1})\|_2 + (1 + \epsilon_{l-1})L_l\|\mathbf{U}_{t, l}\|_2\|x_{t, l-1}\|_2 \label{eq:layer cushion}\\
     &\leq c_l\mu_{t, l}L_l\epsilon_{l-1}\|x_{t, l}\|_2 + (1 + \epsilon_{l-1})L_l\|\mathbf{U}_{t, l}\|_2\|x_{t, l-1}\|_2 \nonumber
\end{align}
Here, \Cref{eq:lip} comes from the Lipschitz-Smoothness of the activation layers, \Cref{eq:induct} comes from applying our induction hypothesis from the previous layer, \Cref{eq:activation contraction} comes from applying the activation contraction from \Cref{def:activation contraction}, \Cref{eq:layer cushion} comes from applying the layer cushion from \Cref{def:layer cushion}.
We also note that the ratio outputs of two subsequential layers is bounded as in the following.
\begin{align}
    \frac{\|x_{t, l-1}\|_2 }{\|x_{t, l}\|_2} &\leq  \frac{c_{l}\|\phi_l(x_{t, l-1})\|_2 }{\|x_{t, l}\|_2} \label{eq:activation contraction again}\\
    &\leq  \frac{c_{l} \mu_{t, l} }{\|\mathbf{A}_{t, l}[\mathcal{A}_{t, l}]\|_2} \label{eq:layer cushion again} 
\end{align}
Here, \Cref{eq:activation contraction again} come from applying the activation contraction from \Cref{def:activation contraction} and \Cref{eq:layer cushion again} comes from applying the layer cushion from \Cref{def:layer cushion}.
Therefore, we have that 
\begin{align}
    \epsilon_l &\leq \frac{c_l\mu_{t, l}L_l\epsilon_{l-1}\|x_{t, l}\|_2 + (1 + \epsilon_{l-1})L_l\|\mathbf{U}_{t, l}\|_2\|x_{t, l-1}\|_2}{\|x_{t, l}\|_2} \nonumber \\
    &\leq c_l\mu_{t, l}L_l\epsilon_{l-1} + \frac{(1 + \epsilon_{l-1})L_l\|\mathbf{U}_{t, l}\|_2\|x_{t, l-1}\|_2}{\|x_{t, l}\|_2} \nonumber \\
    &\leq c_l\mu_{t, l}L_l\epsilon_{l-1} + \frac{(1 + \epsilon_{l-1})\mu_{t,l}L_l\|\mathbf{U}_{t, l}\|_2c_{l}}{\|\mathbf{A}_{t, l}[\mathcal{A}_{t, l}]\|_2} \nonumber 
\end{align}

For mathematical ease, denote $a_l = c_l\mu_{t, l}L_l + \frac{L_l\mu_{t,l}\|\mathbf{U}_{t, l}\|_2c_{l}}{\|\mathbf{A}_{t, l}[\mathcal{A}_{t, l}]\|_2}$ and $b_l = \frac{L_l\mu_{t,l}\|\mathbf{U}_{t, l}\|_2c_{l}}{\|\mathbf{A}_{t, l}[\mathcal{A}_{t, l}]\|_2}$. Thus, we have
\begin{align}
    \epsilon_l &\leq a_l\epsilon_{l-1} + b_{l} \nonumber \\
    &\leq \prod_{i}^l a_i \sum_{i=1}^{l - 1} b_{i} + b_{l} \nonumber \\
    &\leq \prod_{i}^l a_i \left(\sum_{i=1}^{l - 1} b_{i} + b_{l}\right) \nonumber \\
    &\leq \prod_{i}^l a_i \left(\sum_{i=1}^{l} b_{i} \right) \nonumber
\end{align}

We have thus proved our hypothesis. We will now simplify the bound. We have that 
\begin{align}
    \mathbb{E}\left[a_l\right] &= \mathbb{E}\left[c_l\mu_{t, l}L_l + \frac{L_l\mu_{t,l}\|\mathbf{U}_{t, l}\|_2c_{l}}{\|\mathbf{A}_{t, l}[\mathcal{A}_{t, l}]\|_2}\right]\nonumber\\
    &\leq  c_l\mu_{t, l}L_l + L_l\mu_{t,l}(t^{\prime} - t)\gamma\bar{\lambda} W^{-\beta} \alpha^{\frac{1-2\beta}{2}} \label{eq:271}\\
    &\leq  c_l\mu_{t, l}L_l + L_lc_l\mu_{t,l}(t^{\prime} - t)\gamma\bar{\lambda} \nonumber
\end{align}
Here, \Cref{eq:271} comes from \Cref{lem:ourperturbation}. Similarly, we can bound $b_l$ such that 
\begin{align}
     \mathbb{E}\left[b_l\right] &= \mathbb{E}\left[\frac{L_l\mu_{t,l}\|\mathbf{U}_{t, l}\|_2c_{l}}{\|\mathbf{A}_{t, l}[\mathcal{A}_{t, l}]\|_2}\right] \nonumber\\
     &\leq L_lc_l\mu_{t,l}(t^{\prime} - t)\gamma\bar{\lambda} W^{-\beta} \alpha^{\frac{1-2\beta}{2}}\nonumber
\end{align}
Therefore, putting it all together yields 
\begin{align}
    \mathbb{E}\left[\epsilon_l \right]\leq \left(\prod_{i=1}^l c_i\mu_{t,i}L_i + L_ic_i\mu_{t,i}(t^{\prime} - t)\gamma\bar{\lambda}\right) \left(\sum_{i=1}^l L_ic_i(t^{\prime} - t)\gamma\bar{\lambda} W^{-\beta} \alpha^{\frac{1-2\beta}{2}}\mu_{t,i}\right) \nonumber\\
    \leq (t^{\prime} - t)\gamma\bar{\lambda} W^{-\beta} \alpha^{\frac{1-2\beta}{2}} \left(\prod_{i=1}^l c_i\mu_{t,i}L_i + L_i(t^{\prime} - t)\gamma\bar{\lambda}\mu_{t,i} \right)\left(\sum_{i=1}^l L_i\mu_{t,i}\right) \nonumber
\end{align}

We now have that $\forall x \in \mathcal{D}_t$,
$$\mathbb{E}\left[\|\mathbf{M}_{t}(x) - \mathbf{M}_{t^{\prime}}(x)\|_2\right]\leq \underset{x \in \mathcal{D}_t}{\max}\|\mathbf{M}_t(x)\|_2 \cdot (t^{\prime} - t)\gamma\bar{\lambda} W^{-\beta} \alpha^{\frac{1-2\beta}{2}} \left(\prod_{i=1}^l c_i\mu_{t,i}L_i + L_ic_i(t^{\prime} - t)\gamma\bar{\lambda}\mu_{t,i}\right)\left(\sum_{i=1}^l L_ic_i\mu_{t,i}\right)\text{.}$$
\end{proof}
\end{document}


%

%

\onecolumn
\aistatstitle{Instructions for Paper Submissions to AISTATS 2024: \\
Supplementary Materials}

\section{FORMATTING INSTRUCTIONS}

To prepare a supplementary pdf file, we ask the authors to use \texttt{aistats2024.sty} as a style file and to follow the same formatting instructions as in the main paper.
The only difference is that the supplementary material must be in a \emph{single-column} format.
You can use \texttt{supplement.tex} in our starter pack as a starting point, or append the supplementary content to the main paper and split the final PDF into two separate files.

Note that reviewers are under no obligation to examine your supplementary material.

\section{MISSING PROOFS}

The supplementary materials may contain detailed proofs of the results that are missing in the main paper.

\subsection{Proof of Lemma 3}

\textit{In this section, we present the detailed proof of Lemma 3 and then [ ... ]}

\section{ADDITIONAL EXPERIMENTS}

If you have additional experimental results, you may include them in the supplementary materials.

\subsection{The Effect of Regularization Parameter}

\textit{Our algorithm depends on the regularization parameter $\lambda$. Figure 1 below illustrates the effect of this parameter on the performance of our algorithm. As we can see, [ ... ]}

\vfill